\documentclass{article}

    \PassOptionsToPackage{numbers, compress}{natbib}



    \usepackage[final]{neurips_2024}


\usepackage[utf8]{inputenc} 
\usepackage{url}            
\usepackage{booktabs}       
\usepackage{amsfonts}       
\usepackage{nicefrac}       
\usepackage{microtype}      
\usepackage{xcolor}         

\usepackage[T1]{fontenc}
\usepackage{amsfonts, amssymb, amsmath, amsthm, bbm, graphicx, color, tikz, booktabs, multirow, bm, cases, mathtools,  mathrsfs, pgfplots, subcaption, csvsimple,algorithm,algorithmic,wrapfig}
\usetikzlibrary{intersections, pgfplots.fillbetween, shapes, arrows, positioning, decorations.markings}
\definecolor {processblue}{cmyk}{0.96,0,0,0}
\tikzstyle{int}=[draw, fill=blue!20, minimum size=2em]
\tikzstyle{init} = [pin edge={to-,thin,black}]
\usepackage{pgfplotstable}
\usepackage{pgfplots}
\pgfplotsset{compat=1.14}
\usepackage{hyperref, graphics}
\hypersetup{colorlinks=true,linkcolor=blue,filecolor=gray, urlcolor=blue, citecolor=blue}
\usepackage{natbib}
\newtheorem{defn}{Definition}[section]
\newtheorem{lem}{Lemma}[section]

\newtheorem{exam}{Example}

\newtheorem{thm}{Theorem}[section]

\newtheorem{cor}{Corollary}[section]

\newcommand{\ex}[2]{{\ifx&#1& \mathbb{E} \else {\mathbb{E}_{#1}} \fi \left[#2\right]}}
 
\newcommand{\pr}[1]{\left(#1\right)}
\newcommand{\br}[1]{\left[#1\right]}
\newcommand{\abs}[1]{\left|#1\right|}
\newcommand{\kl}[2]{\mathrm{D_{KL}}\left(#1 ||#2 \right)}
\newcommand{\jsd}[2]{\mathrm{D_{JSD}}\left(#1 ||#2 \right)}
\newcommand{\tv}[2]{\mathrm{D_{TV}}\left(#1, #2\right)}

\newcommand{\hesqr}[2]{\mathrm{D_{H^2}}\left(#1 || #2\right)}
\newcommand{\generr}{\mathcal{E}_\mu(\mathcal{A})} 
\DeclareMathOperator*{\esssup}{ess\,sup}
\usepackage[toc,page,header]{appendix}




\title{Generalization Bounds via Conditional \texorpdfstring{$f$}~-Information}

%

\author{%
  Ziqiao Wang \\
  School of Computer Science and Technology\\
  Tongji University\\
  Shanghai, China \\
  \texttt{ziqiaowang@tongji.edu.cn} \\
  \And
  Yongyi Mao \\
  School of Electrical Engineering and Computer Science \\
  University of Ottawa \\
  Ottawa, Canada \\
  \texttt{ymao@uottawa.ca} \\
}

\begin{document}

\maketitle

\begin{abstract}
In this work, we introduce novel information-theoretic generalization bounds using the conditional $f$-information framework, an extension of the traditional conditional mutual information (MI) framework. We provide a generic approach to derive generalization bounds via $f$-information in the supersample setting, applicable to both bounded and unbounded loss functions. Unlike previous MI-based bounds, our proof strategy does not rely on upper bounding the cumulant-generating function (CGF) in the variational formula of MI. Instead, we set the CGF or its upper bound to zero by carefully selecting  the measurable function invoked in the variational formula. Although some of our techniques are partially inspired by recent advances in the coin-betting framework (e.g., \citeauthor{jang2023tighter} (\citeyear{jang2023tighter})), our results are independent of any previous findings from regret guarantees of online gambling algorithms. Additionally, our newly derived MI-based bound recovers many previous results and improves our understanding of their potential limitations. Finally, we empirically compare various $f$-information measures for generalization, demonstrating the improvement of our new bounds over the previous bounds.
\end{abstract}

\section{Introduction}

Understanding generalization is a longstanding topic of machine learning research. Recently, information-theoretic generalization measures, e.g., mutual information (MI) between the input (i.e. data) and output (i.e. hypothesis) of a learning algorithm, have been proposed by \cite{russo2016controlling,russo2019much,xu2017information}. 
While original MI-based information-theoretic generalization bounds have made significant progress in analyzing the generalization of stochastic gradient-based algorithms in nonconvex learning problems (e.g., deep learning) \cite{pensia2018generalization,bu2019tightening,negrea2019information, wang2021analyzing,wang2021optimizing,neu2021information,wang2022generalization,wang2022two,futami2024time}, they are found to have several limitations. One of the most severe limitations is their unboundedness, where the MI measure can be infinite while the true generalization error is small \cite{bassily2018learners}. To mitigate this issue, various bound-tightening techniques have been proposed \cite{asadi2018chaining,bu2019tightening,negrea2019information}. Among them, \cite{steinke2020reasoning} introduces a conditional mutual information (CMI) framework, which allows for the derivation of a CMI-based bound that is strictly upper-bounded by a constant. In the CMI framework, an additional ``ghost sample'' is introduced alongside the original sample drawn from the unknown data distribution. A Bernoulli random variable sequence is then used to determine the membership of training data from these two samples. Ultimately, the CMI between these Bernoulli random variables and the output hypothesis of the learning algorithm, conditioned on these two samples, characterizes generalization. This setup, known as the ``supersample'' setting, ensures the boundedness of the CMI bound due to the constant entropy of a Bernoulli random variable. Intuitively, this CMI quantity leaks less information between data and the hypothesis, resulting in tighter bounds compared to the original MI quantity, as theoretically justified in \cite{haghifam2020sharpened}. Furthermore, several studies have aimed to further tighten CMI-based bounds. For example, previous tightening techniques in \cite{asadi2018chaining,bu2019tightening,negrea2019information} are also applicable to CMI bounds \cite{haghifam2020sharpened,hafez2020conditioning,rodriguez2021random,zhou2022individually,zhou2022stochastic}. Notably, another way to tighten the CMI bound is by utilizing variants of the original CMI quantity, such as functional CMI ($f$-CMI) \cite{harutyunyan2021informationtheoretic}, evaluated CMI (e-CMI) \cite{steinke2020reasoning,hellstrom2022a}, and loss-difference CMI (ld-CMI) \cite{wang2023tighter}. As the output hypothesis is replaced by the predictions, loss pairs, and loss differences of the two samples, these CMI quantities are tighter than the original hypothesis-based CMI due to the data-processing inequality.
While most information-theoretic bounds in the supersample setting focus on (conditional) mutual information 
measures, it is natural to ask whether there is a general way to extend this CMI framework to other statistical distance/divergence measures, such as other $f$-divergences.

Using alternative dependence measures to replace MI in information-theoretic generalization bounds has been studied in several existing works \cite{jiao2017dependence,rodriguez2021tighter,esposito2020robust,esposito2022generalisation,lugosi2022generalization,lugosi2023online}, with some of them considering the supersample setting. In particular, \cite{rodriguez2021tighter} provides some Wasserstein distance-based and total variation-based bounds in the supersample setting.
Furthermore, \cite{lugosi2022generalization} proposes a general convex analysis-based framework that replaces the input-output MI quantity in \cite{russo2016controlling,russo2019much,xu2017information} with any strongly convex function. While \cite{lugosi2022generalization} itself does not discuss the supersample setting, the further extension in \cite{lugosi2023online} indeed considers invoking the supersample setting to strengthen the bounds. 

In this work, we present a generic approach to derive generalization bounds based on conditional $f$-information, a natural extension from mutual information to other $f$-dvergence-based dependence measures. Our proof strategy is significantly different from the original CMI framework \cite{steinke2020reasoning} and the convex analysis (or regret analysis) based framework \cite{lugosi2022generalization,lugosi2023online}. Specifically, our development starts from the variational representation of $f$-divergence, which involves the convex conjugate of a convex function used to define the $f$-divergence. While the previous CMI framework focuses on upper-bounding the cumulant generating function (CGF) of a selected measurable function (which relates to the generalization error), we make a particular choice of such a measurable function, namely the inverse of that convex conjugate function. This choice makes CGF equal to (or upper bounded by) zero, hence eliminated from the variational representation. The remaining task is to lower bound this inverse of the convex conjugate function under a joint distribution. In the case of MI (or KL), the expectation of this inverse function is close to the \textit{log wealth} quantity used in the coin-betting framework for obtaining concentration inequalities \cite{jun2019parameter,jang2023tighter,orabona2024tight}. Indeed, the lower bounding techniques here are inspired by some inequalities used in the coin-betting framework \cite{jun2019parameter,jang2023tighter,orabona2024tight}, and we extend them to the cases of other $f$-divergence in the supersample setting. Notably, unlike \cite{lugosi2023online,jang2023tighter}, our conditional $f$-information generalization bounds do not rely on any existing regret guarantees of online learning algorithms. We discuss the connection with the coin-betting framework in Appendix~\ref{sec:cmi-coin-betting}.\looseness=-1

Given that $f$-divergences all obey the data-processing inequality \cite{yury2022information}, we focus on using loss difference (ld)-based $f$-information, an extension of ld-CMI in \cite{wang2023tighter}, in this work. Specifically, our main contributions are summarized as follows: 1) In the case where the loss difference between a data pair is bounded, we first provide a novel variational formula (cf.~Lemma~\ref{lem:f-div-gen}) for $f$-information, enabling us to derive conditional $f$-information-based generalization bounds; 2) For the KL case, we first provide an ``oracle'' CMI bound (cf.~Theorem~\ref{thm:ld-mi-linear}). This bound recovers many previous CMI bounds, including the square-root bound and fast-rate bounds in low empirical risk settings. It also helps us understand the potential looseness of previous square-root CMI bounds, where some quantities that can vanish as the sample size $n$ increases are simply ignored. Additionally, some novel fast-rate bounds are implied (cf.~Corollary~\ref{cor:solve-ld-mi-bound}-\ref{cor:tv-mi-linear}). Particularly, Corollary~\ref{cor:tv-mi-linear} contains a term in the form of the product of total variation and KL divergence, a similar quantity also appearing in a recent PAC-Bayesian bound \cite{kuzborskij2024better}; 3) We present several other $f$-information-based bounds, including the looser measure, $\chi^2$-information (cf.~Theorem~\ref{thm:ld-chi-linear}) and tighter measures, squared Hellinger (SH)-information (cf.~Theorem~\ref{thm:ld-hellinger-linear}) and Jensen-Shannon (JS)-information (cf.~Theorem~\ref{thm:ld-js-linear}). Due to the similarity in expressions, most arguments for our CMI bounds are also applicable to these bounds based on $f$-information measures; 4) We extend our framework to the unbounded loss difference case, where we provide a refined variational formula for 
$f$-information (cf.~Lemma~\ref{lem:f-div-gen-unbounded}), enabling us to provide a novel  $f$-information-based bound (cf.~Theorem~\ref{thm:ld-mi-unbounded} for the case of KL). The obtained bound is no worse than a previous information-theoretic generalization bound in \cite{jiao2017dependence} (adapted to the same supersample setting); 5) Empirical results show that the new bounds in our framework, particularly the squared Hellinger-information bound, outperform previous results.

\section{Preliminaries}
\paragraph{Notation}
Throughout this paper, we adopt a convention where capitalized letters denote random variables, and their corresponding lowercase letters represent specific realizations. 
Let $P_X$ denote the distribution of a random variable $X$, and $P_{X|Y}$ represent the conditional distribution of $X$ given $Y$. When conditioning on a specific realization, we use $P_{X|Y=y}$ or simply $P_{X|y}$. Additionally, we use $\mathbb{E}_{X}$ and $\mathbb{E}_{P}$ interchangeably to denote the expectation over $X\sim P$, whenever it is clear from context. Moreover, $\mathbb{E}_{X|Y=y}$ (or $\mathbb{E}_{X|y}$) denotes the expectation over $X\sim P_{X|Y=y}$.

\paragraph{Generalization Error}
Let $\mathcal{Z}$ be the domain of the instances, and let $\mathcal{W}$ be the domain of the hypotheses. We denote the unknown distribution of the instances by $\mu$, and let $S=\{Z_i\}_{i=1}^n\overset{i.i.d}{\sim} \mu$ be the training sample. A learning algorithm $\mathcal{A}$, characterized by a conditional distribution $P_{W|S}$, takes the training sample $S$ as the input and outputs a hypothesis $W\in\mathcal{W}$, i.e. $\mathcal{A}:\mathcal{Z}^n\rightarrow \mathcal{W}$. To evaluate the quality of the output hypothesis $W$, we use a loss function $\ell:\mathcal{W}\times\mathcal{Z}\rightarrow \mathbb{R}_0^+$. 
For any hypothesis $w$, the population risk is defined as $L_\mu(w)\triangleq \ex{Z'}{\ell(w,Z')}$, where $Z'\sim\mu$ is an independent testing instance. 
In practice, since $\mu$ is unknown, we use the empirical risk, which is defined as $L_S(w)\triangleq \frac{1}{n}\sum_{i=1}^n\ell(w,Z_i)$, as a proxy for the population risk of $w$. 
The expected generalization error is then defined as $\mathcal{E}_\mu(\mathcal{A})\triangleq \ex{W}{L_\mu(W)}-\ex{W,S}{L_S(W)}$.

\paragraph{Supersample Setting}
We follow the traditional supersample setting introduced by \cite{steinke2020reasoning}. Let an $n\times 2$ matrix $\widetilde{Z}\in \mathcal{Z}^{n\times 2}$ be the supersample, where each entry is drawn i.i.d. from the data distribution $\mu$. We index the columns of $\widetilde{Z}$ by $\{0, 1\}$, and denote the $i$th row of $\widetilde{Z}$ as $\widetilde{Z}_i$ with entries $(\widetilde{Z}_{i,0},\widetilde{Z}_{i,1})$.
We use $U=\{U_i\}_{i=1}^n\sim \mathrm{Unif}(\{0,1\}^n)$, independent of $\widetilde{Z}$,  to determine the training sample membership from $\widetilde{Z}$. Specifically, when $U_i=0$, $\widetilde{Z}_{i, 0}$ in $\widetilde{Z}$ is included in the training set $S$, and $\widetilde{Z}_{i, 1}$ is used for testing; $U_i=1$ dictates the opposite case.
Let $\overline{U}_i=1-U_i$,  the training sample $S$ is then equivalent to $\widetilde{Z}_U=\{\widetilde{Z}_{i,U_i}\}_{i=1}^n$, and the testing sample is $\widetilde{Z}_{\overline{U}}=\{\widetilde{Z}_{i,\overline{U}_i}\}_{i=1}^n$. 
Additionally, 
we let $L_{i,0}\triangleq {\ell(W,\widetilde{Z}_{i,0})}$ and define $L_{i,1}$ similarly. 
Let $\Delta L_i=L_{i,1}-L_{i,0}$ 
be the loss difference in the $i$th row of $\widetilde{Z}$. To avoid complicated subscripts, we also use superscripts $+$ and $-$ to replace the subscripts $0$ and $1$, 
respectively, namely, $\widetilde{Z}^+_{i}=\widetilde{Z}_{i,0}$, $\widetilde{Z}^-_{i}=\widetilde{Z}_{i,1}$, $L^+_{i}=L_{i,0}$ and $L^-_{i}=L_{i,1}$. Furthermore, we define $G_i\triangleq(-1)^{{U}_i}\Delta L_i$, i.e. the testing loss minus the training loss at position $i$. Clearly, $\frac{1}{n}\sum_{i=1}^n G_i$ is an unbiased estimator of $\mathcal{E}_\mu(\mathcal{A})$. In other words, $\mathcal{E}_\mu(\mathcal{A})=\frac{1}{n}\sum_{i=1}^n \ex{\Delta L_i,U_i}{G_i}$.

\paragraph{\texorpdfstring{$f$}--Divergence and \texorpdfstring{$f$}--Information} 
The family of $f$-divergence is defined as follows.
\begin{defn}
[$f$-divergence \cite{yury2022information}]
Let $P$ and $Q$ be two distributions on $\Theta$, and let $\phi:\mathbb{R}_{+}\to\mathbb{R}$ be a convex function with $\phi(1)=0$. If $P\ll Q$, then the $f$-divergence between $P$ and $Q$ is defined as
\[
{\rm D}_{\phi}(P||Q)\triangleq \ex{Q}{\phi\pr{\frac{dP}{dQ}}},
\]
where $\frac{dP}{dQ}$ is a Radon-Nikodym derivative. 
\end{defn}


A variational representation for $f$-divergence, 
as provided below, has been independently investigated in previous works \cite{ben2007old, jiao2017dependence, birrell2022optimizing, agrawal2020optimal,yury2022information}. 
Recently, this variational representation has also been applied to PAC-Bayesian generalization theory \cite{viallard2024tighter} and domain adaptation theory \cite{wang2024f,liu2024information}.

\begin{lem}[{\cite[Corollary~3.5]{agrawal2020optimal}}]
\label{lem:strong-variational}
Let $\phi^*$ be the convex conjugate\footnote{For a function 
$f:\mathcal{X}\to\mathbb{R}\cup\{-\infty, +\infty\}$, its convex conjugate is $f^*(y)\triangleq \sup_{x\in{\rm dom}(f)}\langle x, y\rangle - f(x)$.} of $\phi$, 
and $\mathcal{G}=\{g:\Theta\to{\rm dom}(\phi^*)\}$. The following, known as variational formula of $f$-divergence, holds:
\[
{\rm D}_{\phi}(P||Q)=\sup_{g\in\mathcal{G}}\ex{\theta\sim P}{g(\theta)}-\inf_{\lambda\in\mathbb{R}}\left\{\ex{\theta\sim Q}{\phi^*(g(\theta)+\lambda)}-\lambda\right\}.
\]
\end{lem}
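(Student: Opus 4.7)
The plan is to reduce the claimed identity to the standard Donsker--Varadhan--type variational representation of an $f$-divergence, and then show that the extra optimization over $\lambda$ leaves the supremum unchanged by a shift-symmetry argument.

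First, I would establish the ``base'' variational identity
\[
\mathrm{D}_\phi(P\|Q) = \sup_{g \in \mathcal{G}} \left\{ \ex{\theta \sim P}{g(\theta)} - \ex{\theta \sim Q}{\phi^*(g(\theta))} \right\}.
\]
This comes from the Fenchel--Moreau biconjugation $\phi(x)=\sup_{y\in\mathrm{dom}(\phi^*)}\{xy-\phi^*(y)\}$, which is valid because $\phi$ is convex and (as is standard for $f$-divergences) lower semi-continuous. Applying this pointwise at $x=\frac{dP}{dQ}(\theta)$ and integrating against $Q$, the direction ``$\geq\sup_g$'' follows by plugging any $g\in\mathcal{G}$ into the pointwise supremum, while ``$\leq\sup_g$'' follows from a measurable selection argument choosing $g(\theta)$ inside the subdifferential $\partial\phi\bigl(\tfrac{dP}{dQ}(\theta)\bigr)$.

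Second, I would rewrite the right-hand side of the claim. Using $-\inf_\lambda X(\lambda)=\sup_\lambda(-X(\lambda))$ together with the identity $\ex{\theta\sim P}{g(\theta)}+\lambda=\ex{\theta\sim P}{g(\theta)+\lambda}$, the RHS becomes
\[
\sup_{g\in\mathcal{G}}\sup_{\lambda\in\mathbb{R}}\left\{\ex{\theta\sim P}{g(\theta)+\lambda}-\ex{\theta\sim Q}{\phi^*(g(\theta)+\lambda)}\right\}.
\]
I would then sandwich this reformulated expression against $\mathrm{D}_\phi(P\|Q)$. The inequality ``$\geq \mathrm{D}_\phi(P\|Q)$'' is immediate: choosing $\lambda=0$ recovers the base identity of Step~1. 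For the reverse inequality, for any admissible pair $(g,\lambda)$ I would apply the pointwise Fenchel--Young inequality $y\cdot\tfrac{dP}{dQ}(\theta)-\phi^*(y)\leq\phi\bigl(\tfrac{dP}{dQ}(\theta)\bigr)$ with $y=g(\theta)+\lambda$ (valid for all $y\in\mathbb{R}$ under the convention $\phi^*(y)=+\infty$ outside $\mathrm{dom}(\phi^*)$), and integrate against $Q$ to obtain $\ex{\theta\sim P}{g+\lambda}-\ex{\theta\sim Q}{\phi^*(g+\lambda)}\leq \mathrm{D}_\phi(P\|Q)$. Taking the sup over $(g,\lambda)$ closes the sandwich.

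The most delicate step will be the measurable selection in Step~1: when $\partial\phi$ is multi-valued or empty at the boundary of $\mathrm{dom}(\phi)$, one must either invoke a Kuratowski--Ryll-Nardzewski--type selection theorem or approximate the pointwise optimizer by a sequence of bounded $g_n$ that saturates the pointwise supremum monotonically, then pass to the limit via monotone convergence. One must also handle absolute continuity correctly (so that $\tfrac{dP}{dQ}$ is well-defined). Beyond that, the remainder is straightforward algebra that exploits the invariance of the Fenchel--Young inequality under constant shifts of its second argument.
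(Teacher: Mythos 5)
The paper does not prove this lemma at all: it is imported verbatim as Corollary~3.5 of Agrawal and Horel, so there is no in-paper argument to compare against. Your self-contained derivation is sound and follows the standard route. The two key observations are exactly right: (i) the base identity $\mathrm{D}_\phi(P\|Q)=\sup_{g}\{\ex{P}{g}-\ex{Q}{\phi^*(g)}\}$ holds with equality by Fenchel--Moreau biconjugation plus a (near-)optimal selection $g(\theta)\in\partial\phi\bigl(\tfrac{dP}{dQ}(\theta)\bigr)$, which automatically lands in ${\rm dom}(\phi^*)$ since subgradients have finite conjugate value; and (ii) the extra $\lambda$-optimization is a free tightening for each \emph{fixed} $g$ (it is what turns the weak KL representation into Donsker--Varadhan) but contributes nothing after the supremum over $g$, because $g+\lambda$ is just another test function, so the augmented supremum is sandwiched between the base supremum and $\mathrm{D}_\phi(P\|Q)$ via pointwise Fenchel--Young with the $\phi^*\equiv+\infty$ convention outside ${\rm dom}(\phi^*)$. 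The caveats you flag are the right ones: lower semicontinuity of $\phi$ at $0$ (the usual convention $\phi(0)=\lim_{x\downarrow 0}\phi(x)$), measurable selection or monotone approximation by bounded $g_n$ where $\partial\phi$ is empty or multivalued, and $P\ll Q$ so that no singular-part correction $\phi'(\infty)P^{\perp}(\Theta)$ appears. This is precisely how the cited result is proved in the source, so nothing is missing.
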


Following \cite[Section~7.8]{yury2022information}, let $I_\phi(X;Y)\triangleq {\rm D}_{\phi}(P_{X,Y}||P_XP_Y)$ be the $f$-information, which extends the standard mutual information (MI) defined in terms of KL divergence.

We now set $\theta=(\Delta L_i,U_i)$ and $g(\Delta L_i,U_i)=tG_i$ in Lemma~\ref{lem:strong-variational}, where $t\in\mathbb{R}$, $P=P_{\Delta L_i,U_i}$ is the joint distribution of $(\Delta L_i,U_i)$ and $Q=P_{\Delta L_i}P_{U_i}$ is the product of the marginal distributions. Consequently, Lemma~\ref{lem:strong-variational} directly implies that

\begin{align}
    \ex{P}{G_i}\leq& \inf_{t\in\mathbb{R}} \frac{1}{t}\pr{I_\phi(\Delta L_i;U_i)+\inf_{\lambda\in\mathbb{R}}\left\{\ex{Q}{\phi^*(t(-1)^{U'_i}\Delta L_i+\lambda)}-\lambda\right\}}\label{ineq:strong-f-bound}\\
    \leq&\inf_{t\in\mathbb{R}}\frac{1}{t}\pr{I_\phi(\Delta L_i;U_i)+\ex{Q}{\phi^*(t(-1)^{U'_i}\Delta L_i)}}\label{ineq:weak-f-bound},
\end{align}
where we let $\lambda=0$ in Eq.~(\ref{ineq:weak-f-bound}).
Notice that if $\phi(x)=x\log{x}+c(x-1)$ for any constant $c$, then $I_\phi(X;Y)$ becomes MI. In this case, the optimal $\lambda^*=-\log{{\ex{Q}{e^{g(\theta)}}}}$ in Lemma~\ref{lem:strong-variational}, and Lemma~\ref{lem:strong-variational} recovers the Donsker and Varadhan’s variational formula (cf.~Lemma~\ref{lem:DV-KL}). Here, the second term in Eq.~(\ref{ineq:strong-f-bound}) represents the cumulant generating function of $(\Delta L_i,U_i')$. By analyzing the tail behavior of $(\Delta L_i,U_i')$ and applying Eq.~(\ref{ineq:strong-f-bound}), one can derive the final MI-based generalization bound.  For example, if the loss is bounded, Eq.~(\ref{ineq:strong-f-bound}) recovers \cite[Theorem~3.2]{wang2023tighter} by using Hoeffding's Lemma.

On one hand, while Eq.~(\ref{ineq:strong-f-bound}) is tighter than Eq.~(\ref{ineq:weak-f-bound}), the parameter $\lambda$ may not have an analytic 
form for divergences beyond KL and $\chi^2$. In this sense, Eq.~(\ref{ineq:weak-f-bound}) has its own merits. On the other hand, previous MI-based generalization bounds use concentration results to further upper bound the second term in Eq.~(\ref{ineq:strong-f-bound}). While this strategy has been successful for the MI case, for instance by using Hoeffding's Lemma, 
it may be challenging to obtain similar concentration results for other $f$-information. In this work, we introduce a novel approach to prove generalization bounds for $f$-information.

\section{Conditional \texorpdfstring{$f$}--Information Bounds: Bounded Loss Difference Case}
\label{sec:generalization-bounds}

We begin by presenting a general lemma that will serve as a main recipe for obtaining generalization bounds in this section. This lemma introduces a new variational formula for $f$-divergence, which may be of independent interest beyond the context of generalization.
\begin{lem}
\label{lem:f-div-gen}
    Given random variables $X$, $Y$, and a measurable function $f$ for $(X,Y)$. For every $t \in[b_1,b_2]\subseteq\mathbb{R}$, assume that $\phi^{*}$ is invertible within the range of $t\cdot f$ , which we denote by $\Gamma_t$. Let $\phi^{*-1}$ be the inverse of $\phi^{*}$ on $\cup_{t\in [b_1, b_2]} \Gamma_t$, and let $Y'$ be an independent copy of $Y$. If $\ex{X,Y'}{f(X,Y')}=0$, 
    then
    \[
    \sup_{t\in[b_1,b_2]}\ex{X,Y}{\phi^{*-1}(tf(X,Y))}\leq I_\phi(X;Y).
    \]
\end{lem}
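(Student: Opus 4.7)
The plan is to derive the bound as a direct specialization of the variational formula in Lemma~\ref{lem:strong-variational}, using the ``inverse-conjugate'' trick flagged earlier in the introduction: choose the variational test function $g$ so that $\phi^*(g)$ reduces to a linear functional, which the mean-zero hypothesis then kills.

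Concretely, I would apply Lemma~\ref{lem:strong-variational} with $\theta = (x,y)$, $P = P_{X,Y}$, and $Q = P_X P_Y$, so that $\mathrm{D}_\phi(P\|Q) = I_\phi(X;Y)$. Fix any $t \in [b_1,b_2]$ and set $g(x,y) \triangleq \phi^{*-1}(t\,f(x,y))$. The invertibility assumption on $\phi^*$ over $\Gamma_t$ ensures this $g$ is well defined and measurable, and that its range lies in $\mathrm{dom}(\phi^*)$ as required. Since the variational formula supplies a lower bound for every admissible $g$ and every $\lambda$, I take $\lambda = 0$ and obtain
\[
I_\phi(X;Y) \;\geq\; \ex{X,Y}{\phi^{*-1}(t\,f(X,Y))} \;-\; \ex{X,Y'}{\phi^{*}\!\bigl(\phi^{*-1}(t\,f(X,Y'))\bigr)}.
\]
By construction $\phi^{*}\circ\phi^{*-1}$ is the identity on $\Gamma_t$, so the second expectation collapses to $t\,\mathbb{E}_{X,Y'}[f(X,Y')]$, which vanishes by the hypothesis $\ex{X,Y'}{f(X,Y')}=0$. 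This yields $I_\phi(X;Y) \geq \ex{X,Y}{\phi^{*-1}(t\,f(X,Y))}$ for each individual $t \in [b_1,b_2]$, and taking the supremum over $t$ on the right-hand side (the left-hand side does not depend on $t$) finishes the argument.

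The only step that requires any care, rather than mechanical substitution, is checking that $g=\phi^{*-1}\!\circ(tf)$ is genuinely an admissible choice in the class $\mathcal{G}$ of Lemma~\ref{lem:strong-variational}: one must verify measurability and, more importantly, that $g$ takes values in $\mathrm{dom}(\phi^*)$ so that $\phi^{*}(g)$ is finite and the cancellation $\phi^{*}\circ\phi^{*-1}=\mathrm{id}$ is legitimate. This is exactly what the hypothesis ``$\phi^*$ is invertible within $\Gamma_t$'' is designed to guarantee, so the verification is short. Everything else is bookkeeping, and no concentration-type bound on any cumulant generating function is needed, in line with the paper's stated proof philosophy.
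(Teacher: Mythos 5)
Your proposal is correct and follows exactly the route the paper intends: specialize Lemma~\ref{lem:strong-variational} to $P = P_{X,Y}$, $Q = P_X P_Y$ with $g = \phi^{*-1}\circ(t f)$ and $\lambda = 0$, so that $\ex{Q}{\phi^*(g)} = t\,\ex{X,Y'}{f(X,Y')} = 0$ and only $\ex{P}{\phi^{*-1}(tf)}$ survives. The paper itself describes the lemma as obtained by precisely this substitution, so there is nothing to add.
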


Although Lemma~\ref{lem:f-div-gen} is a simple and straightforward result derived from Lemma~\ref{lem:strong-variational} by setting $g= \phi^{*-1}\circ(tf)$, it is quite  powerful for deriving $f$-information-based generalization bounds. Specifically, in the supersample setting, due to the symmetric properties of two columns of $\widetilde{Z}$, we set $X=\Delta L_i$, $Y=U_i$ and $f(\Delta L_i,U_i')=(-1)^{U'_i}\Delta L_i$ (where $U'_i$ is an independent copy of $U_i$),
the condition in Lemma~\ref{lem:f-div-gen}, namely $\ex{X,Y'}{f(X,Y')}=0$, is clearly met. Then we have
\[
\sup_{t}\ex{\Delta L_i,U_i}{\phi^{*-1}\pr{tG_i}}\leq I_\phi(\Delta L_i;U_i).
\]

After carefully choosing $b_1,b_2$ such that $\phi^{*-1}$ is well defined, 
the main focus is to find a lower bound for the function $\phi^{*-1}$  over a certain interval. We will start with the standard MI case.



\subsection{Mutual Information (KL-based) Generalization Bound} 
\label{sec:MI-bound}
Consider $\phi(x)=x\log{x}+x-1$. Its convex conjugate function is $\phi^*(y)=e^y-1$  with the inverse $\phi^{*-1}(z)=\log(1+z)$. Building upon Lemma~\ref{lem:f-div-gen}, we obtain the following bound:
\begin{thm}
\label{thm:ld-mi-linear}
    Assume the loss difference $\ell(w,z_1)-\ell(w,z_2)$ is bounded in $[-1,1]$ for any $w\in\mathcal{W}$ and $z_1,z_2\in\mathcal{Z}$, we have
    \[
    \abs{\generr}\leq\frac{1}{n}\sum_{i=1}^n\sqrt{2\pr{\ex{}{\Delta L^2_i}+\abs{\ex{}{G_i}}}I(\Delta L_i;U_i)}.
    \]
\end{thm}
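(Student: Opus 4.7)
The plan is to specialize Lemma~\ref{lem:f-div-gen} to the KL case, i.e.\ $\phi(x)=x\log x+x-1$, $\phi^*(y)=e^y-1$, $\phi^{*-1}(z)=\log(1+z)$, with the choices $X=\Delta L_i$, $Y=U_i$, and $f(\Delta L_i,U_i')=(-1)^{U_i'}\Delta L_i$. Since the marginal $U_i'$ is uniform on $\{0,1\}$ and independent of $\Delta L_i$ under the product measure, $\ex{}{(-1)^{U_i'}\Delta L_i}=0$, so the hypothesis of Lemma~\ref{lem:f-div-gen} is satisfied and I obtain
\[
\ex{}{\log(1+tG_i)} \le I(\Delta L_i;U_i)
\]
for every $t$ such that $1+tG_i>0$ almost surely; by the assumption $G_i=(-1)^{U_i}\Delta L_i\in[-1,1]$ this is valid for every $t\in(-1,1)$.

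The main technical step, and the obstacle I expect to be non-trivial, is identifying and proving the correct scalar lower bound for $\log(1+y)$ that produces the mixed term $\ex{}{\Delta L_i^2}+|\ex{}{G_i}|$ after taking expectations. I would establish
\[
\log(1+y) \ge y - \frac{y^2}{2(1-|y|)},\qquad y\in(-1,1),
\]
by splitting on the sign of $y$ and analyzing $h_{\pm}(y)\triangleq \log(1+y)-y+y^2/(2(1\mp y))$. On $(-1,0]$ a direct differentiation gives $h_{-}'(y)=-y^2/(2(1+y)^2)\le 0$, so $h_{-}$ is non-increasing and $h_{-}(0)=0$ forces $h_{-}\ge 0$. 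On $[0,1)$, I check $h_{+}(0)=h_{+}'(0)=h_{+}''(0)=0$ and $h_{+}'''(y)=2/(1+y)^3+3/(1-y)^4>0$, hence $h_{+}\ge 0$. The crucial feature is the denominator $1-|y|$ rather than a constant: this is exactly what, after the optimization below, produces the extra $|\ex{}{G_i}|$ summand inside the square root.

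Substituting $y=tG_i$, using $|G_i|\le 1$ to bound $1-|tG_i|\ge 1-|t|$, and taking expectations yields
\[
I(\Delta L_i;U_i) \ge t\,\ex{}{G_i} - \frac{t^{2}\,\ex{}{G_i^{2}}}{2(1-|t|)},\qquad |t|<1.
\]
I would then pick
\[
t^{\star} = \frac{\ex{}{G_i}}{\ex{}{G_i^{2}}+|\ex{}{G_i}|},
\]
which satisfies $|t^{\star}|<1$ and $1-|t^{\star}|=\ex{}{G_i^{2}}/(\ex{}{G_i^{2}}+|\ex{}{G_i}|)$. Plugging in and simplifying collapses the two terms to
\[
\frac{(\ex{}{G_i})^{2}}{2\pr{\ex{}{G_i^{2}}+|\ex{}{G_i}|}} \le I(\Delta L_i;U_i),
\]
which rearranges to $|\ex{}{G_i}|\le \sqrt{2(\ex{}{G_i^{2}}+|\ex{}{G_i}|)\,I(\Delta L_i;U_i)}$. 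Since $G_i^{2}=(\Delta L_i)^{2}$, the identity $\generr=\frac{1}{n}\sum_{i=1}^{n}\ex{}{G_i}$ together with the triangle inequality applied row-by-row then yields the claimed bound.
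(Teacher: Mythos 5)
Your proof is correct and follows essentially the same route as the paper: apply Lemma~\ref{lem:f-div-gen} with $\phi^{*-1}(z)=\log(1+z)$, lower-bound the logarithm by a quadratic expression in $tG_i$, optimize over $t$, and solve the resulting self-bounding inequality for $\abs{\ex{}{G_i}}$. The only real difference is in the scalar inequality: the paper uses the one-parameter family $\log(1+x)\ge x-ax^2$ on $\left[\frac{1}{2a}-1,\,1-\frac{1}{2a}\right]$ with the adaptive choice $a=\frac{\abs{\ex{}{G_i}}}{2\ex{}{G_i^2}}+\frac{1}{2}$, whereas your parameter-free bound $\log(1+y)\ge y-\frac{y^2}{2(1-\abs{y})}$ is precisely the envelope of that family and leads to the identical key inequality $\mathbb{E}^2[G_i]\le 2\pr{\ex{}{G_i^2}+\abs{\ex{}{G_i}}}I(\Delta L_i;U_i)$ at the same optimizer $t^{\star}$.
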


We remark that a bounded loss difference is a less restrictive assumption than a strictly bounded loss. For example, if $\ell$ is $L$-Lipschitz in the second argument, then $\ell(w,z_1)-\ell(w,z_2)\leq L||z_1-z_2||$. Thus, as long as $||z_1-z_2||$ is bounded, the loss difference is bounded even when $\ell$ itself is unbounded. Additionally, we refer to Theorem~\ref{thm:ld-mi-linear} as the ``{\it oracle}'' CMI bound because the upper bound itself includes the expected generalization error at position $i$, $\ex{}{G_i}$, which is precisely the quantity we aim to bound.

From Theorem~\ref{thm:ld-mi-linear}, first, if we simply upper bound the loss difference $\Delta L_i$ (and $G_i$) by one, Theorem~\ref{thm:ld-mi-linear} recovers \cite[Theorem~3.2]{wang2023tighter} up to a constant. More importantly, Theorem~\ref{thm:ld-mi-linear} indicates that in the case of bounded loss difference or bounded loss, solely using $I(\Delta L_i;U_i)$ as the generalization measure for algorithm $\mathcal{A}$ is insufficient to accurately characterize its generalization error. Specifically, when $2\pr{\ex{}{\Delta L^2_i}+\abs{\ex{}{G_i}}}$ vanishes as $n$ increases, relying on the MI-based measure alone, i.e.,  $\sqrt{I(\Delta L_i;U_i)}$, will always result in a slow convergence rate for $\abs{\generr}$. 
While similar observations have been made in recent studies \cite{wu2022fast, wu2023tightness, zhou2023exactly}, where it is found that the sub-Gaussian variance proxy in previous MI bounds may also vanish in some examples,
our Theorem~\ref{thm:ld-mi-linear} provides a more straightforward understanding of the potential looseness of $\mathcal{O}(\sqrt{I(\Delta L_i;U_i)})$. Additionally, further discussion comparing our oracle CMI bound with the MI-based results in \cite{lugosi2022generalization,lugosi2023online} is provided in Appendix~\ref{sec:Compare-Lugosi}.


To highlight the differences in our proof techniques compared to previous information-theoretic generalization bounds, we provide a proof sketch below.
\begin{proof}[Proof Sketch of Theorem~\ref{thm:ld-mi-linear}]
    Lemma~\ref{lem:f-div-gen} gives us $I(\Delta L_i;U_i)\!\geq\! \sup_{t}\ex{}{\log\pr{1\!+\!t(-1)^{{U}_i}\Delta L_i}}$.
    Let $f(x)=\log(1+x)-x+ax^2$ and set $a=\frac{\abs{\ex{}{G_i}}}{2\ex{}{G_i^2}}+\frac{1}{2}$. By demonstrating that $f(x)\geq 0$ holds when $a\geq\frac{1}{2}$ and $\abs{x}\leq 1-\frac{1}{2a}$, we have $
        \sup_{t>-1}\ex{}{\log\pr{1+tG_i}}\geq \sup_{t\in[\frac{1}{2a}-1,1-\frac{1}{2a}]}\ex{}{tG_i-at^2G_i^2}$. The supremum is attained when $t^*=\frac{\ex{}{G_i}}{2a\ex{}{G_i^2}}=\frac{\ex{}{G_i}}{\abs{\ex{}{G_i}}+\ex{}{G_i^2}}$, which is achievable. 
    Therefore, $I(\Delta L_i;U_i)\geq \sup_{t\in(-1,+\infty)}\ex{\Delta L_i,U_i}{\log\pr{1+ t(-1)^{{U}_i}\Delta L_i}}\geq \frac{\mathbb{E}^2[G_i]}{4a\ex{}{G_i^2}}$,
    which simplifies to
    \begin{align}
        \mathbb{E}^2[G_i]\leq 2\pr{\abs{\ex{}{G_i}}+\ex{}{G_i^2}}I(\Delta L_i;U_i).
        \label{ineq:key-bound}
    \end{align}
    The remaining steps are straightforward. See Appendix~\ref{sec:proof-mi} for the complete proof.
    \vspace{-3mm}
\end{proof}

It is also worth noting that by letting $g = \phi^{*-1}$ and $\ex{Q}{g}=0$ in the case of KL divergence, we obtain $\lambda^* = 0$ in Lemma~\ref{lem:strong-variational}. As a result, Eq.~(\ref{ineq:weak-f-bound}) becomes equivalent to Eq.~(\ref{ineq:strong-f-bound}). In this sense, starting from Eq.~(\ref{ineq:weak-f-bound}) does not compromise the tightness of Eq.~(\ref{ineq:strong-f-bound}). Moreover, the proof provides deeper insights into the tightness of Theorem~\ref{thm:ld-mi-linear}, which we will now discuss.

\paragraph{Small Empirical Risk Case or Realizable Setting}
Previously, fast-rate information-theoretic generalization bounds in the realizable setting---where the square-root function is removed---have been derived by demonstrating that the CGF can be negative \cite{steinke2020reasoning,haghifam2022understanding,hellstrom2022a,wang2023tighter}, or by invoking the channel capacity result of the binary channel \cite{haghifam2022understanding,wang2023tighter}.

Based on Theorem~\ref{thm:ld-mi-linear}, the square-root function can be directly removed in the realizable setting.
Specifically, if the learning algorithm $\mathcal{A}$ is an interpolating algorithm, i.e., the training loss is always minimized to zero, then $G_i\in [0,1]$ always holds. In this case, $\abs{\ex{}{G_i}}=\ex{}{G_i}$ and $\ex{}{G^2_i}\leq \ex{}{G_i}$, allowing us to obtain $\ex{}{G_i}\leq 4I(\Delta L_i;U_i)$ from Eq.~(\ref{ineq:key-bound}). Consequently, Theorem~\ref{thm:ld-mi-linear} simplifies to $\generr\leq\frac{4}{n}\sum_{i=1}^nI(\Delta L_i;U_i)$. Note that for this bound to hold, the condition of zero training loss can be relaxed to the training loss being always no larger than the testing loss.

\begin{wrapfigure}{r}{0.7\textwidth}
\vspace{-5mm}
  \centering
    \begin{subfigure}[b]{0.33\textwidth}
        \centering
        \includegraphics[width=\textwidth]{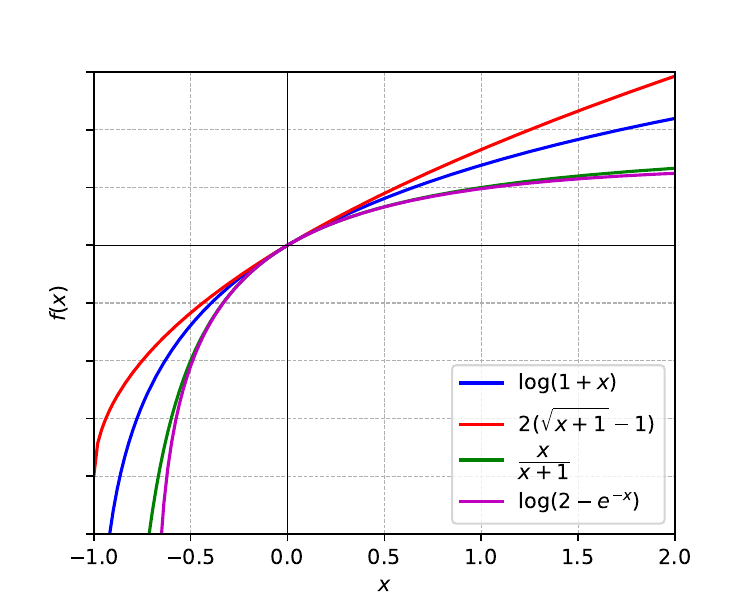}
        \label{fig:first}
    \end{subfigure}
    \begin{subfigure}[b]{0.33\textwidth}
        \centering
        \includegraphics[width=\textwidth]{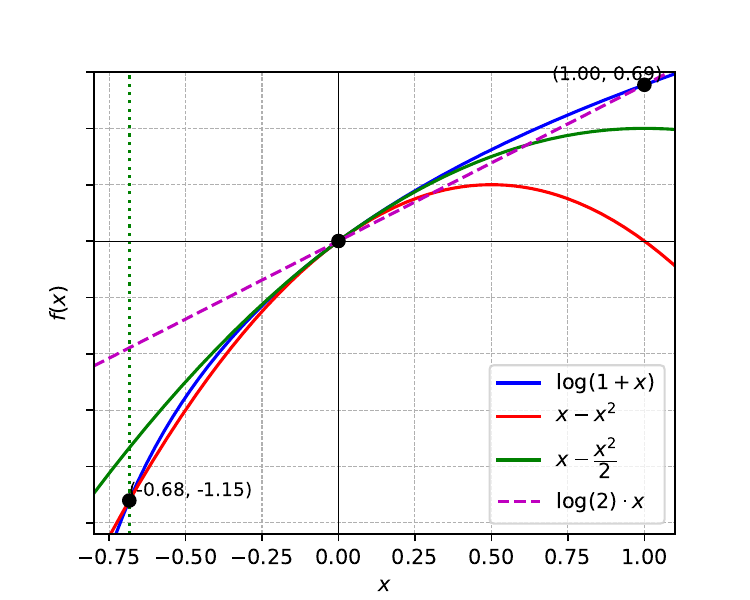}
        \label{fig:second}
    \end{subfigure}
    \vspace{-5mm}
    \caption{Comparison of different $\phi^{*-1}$ (Left) and examples of $x\!-ax^2$ for lower-bounding $\log(1+x)$ (Right).}
    \label{fig:both}
    \vspace{-3mm}
\end{wrapfigure}
Furthermore, to achieve a better constant in the bound, observe that the proof of 
our Theorem~\ref{thm:ld-mi-linear} utilizes the inequality $\log(1+x)\geq x-ax^2$ for $\abs{x}\leq 1-\frac{1}{2a}$. If $x\in[0,1]$ always holds, one can use the inequality  $\log(1+x)\geq x\log{2}$ instead, where equality holds when the loss is zero-one loss (See Figure~\ref{fig:both}(Right) for a visualized illustration). As a result, we obtain $\generr\leq\sum_{i=1}^n\frac{I(\Delta L_i;U_i)}{n\log{2}}$, which is tighter than the previous bound and will never be vacuous (i.e. the bound is always smaller than $1$) since  $I(\Delta L_i;U_i)\leq H(U_i)=\log{2}$. In fact, this bound can exactly characterize the generalization error (i.e., equality holds) if the loss is the zero-one loss, as demonstrated in \cite[Theorem~3.3]{wang2023tighter}. 

\paragraph{New Fast-Rate Bounds}
Theorem~\ref{thm:ld-mi-linear} not only recovers the previous fast-rate ld-CMI bound in the realizable setting but also introduces new fast-rate bounds.

By solving Eq.~(\ref{ineq:key-bound}) for $\abs{\ex{}{G_i}}$, we obtain the following bound.
\begin{cor}
    \label{cor:solve-ld-mi-bound}
    Under the conditions of Theorem~\ref{thm:ld-mi-linear}, we have
    \[
    \abs{\generr}\leq\frac{1}{n}\sum_{i=1}^n\pr{2I(\Delta L_i;U_i)+\sqrt{2\ex{}{\Delta L^2_i}I(\Delta L_i;U_i)}}.
    \]
\end{cor}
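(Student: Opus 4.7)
The plan is to start directly from inequality (\ref{ineq:key-bound}) established in the proof sketch of Theorem~\ref{thm:ld-mi-linear}, namely
\[
\mathbb{E}^2[G_i]\leq 2\pr{\abs{\ex{}{G_i}}+\ex{}{G_i^2}}I(\Delta L_i;U_i),
\]
and simply solve for $\abs{\ex{}{G_i}}$ as a quadratic in one variable. Setting $x=\abs{\ex{}{G_i}}$, $a=I(\Delta L_i;U_i)$ and $b=\ex{}{G_i^2}$, and noting that $\mathbb{E}^2[G_i]=x^2$, the inequality becomes $x^2-2ax-2ab\le 0$. The nonnegative root of the corresponding quadratic yields
\[
x\le a+\sqrt{a^2+2ab}.
\]

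Next I would apply the elementary bound $\sqrt{a^2+2ab}\le a+\sqrt{2ab}$ (which follows from $(a+\sqrt{2ab})^2=a^2+2a\sqrt{2ab}+2ab\ge a^2+2ab$) to decouple the two terms, obtaining
\[
\abs{\ex{}{G_i}}\le 2I(\Delta L_i;U_i)+\sqrt{2\ex{}{G_i^2}I(\Delta L_i;U_i)}.
\]
Since $G_i=(-1)^{U_i}\Delta L_i$, we have $G_i^2=\Delta L_i^2$ and hence $\ex{}{G_i^2}=\ex{}{\Delta L_i^2}$, matching the expression in the corollary statement.

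Finally, I would invoke the identity $\mathcal{E}_\mu(\mathcal{A})=\frac{1}{n}\sum_{i=1}^n\ex{}{G_i}$ recalled in the preliminaries, together with the triangle inequality $\abs{\mathcal{E}_\mu(\mathcal{A})}\le \frac{1}{n}\sum_{i=1}^n\abs{\ex{}{G_i}}$, to obtain the claimed bound upon summing the per-coordinate inequality and dividing by $n$.

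There is no real obstacle here; the proof is essentially the quadratic formula plus one standard square-root inequality. The only mildly delicate point is the decoupling step $\sqrt{a^2+2ab}\le a+\sqrt{2ab}$, which is used to present the result as the sum of a linear-in-$I$ fast-rate term and a square-root ``slow-rate'' term rather than leaving the tighter but less interpretable form $a+\sqrt{a^2+2ab}$.
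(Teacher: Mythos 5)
Your proposal is correct and follows essentially the same route as the paper's proof: both solve the quadratic inequality from Eq.~(\ref{ineq:key-bound}) for $\abs{\ex{}{G_i}}$ and then apply $\sqrt{x+y}\leq\sqrt{x}+\sqrt{y}$ to split the root into the fast-rate and slow-rate terms. The identification $\ex{}{G_i^2}=\ex{}{\Delta L_i^2}$ and the final averaging step are likewise identical to the paper's argument.
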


This fast-rate bound has not been proved in \cite{wang2023tighter}, yet it resembles \cite[Eq.~(2)]{wang2023tighter} in structure. Specifically, the single-loss MI term, $2I(L^+_i;U_i)$ in \cite[Eq.~(2)]{wang2023tighter}, is now substituted with $I(\Delta L_i;U_i)$, and the empirical risk term, $\ex{}{L_S(W)}$, is replaced by $\ex{}{\Delta L^2_i}$. Here, $\Delta L_i=L_i^--L_i^+$, and notice that both $L_i^+$ and $L_i^-$ follow the same marginal distribution due to the symmetric construction of the supersample, so we can obtain that $\ex{}{\Delta L^2_i}=\ex{}{(L^-_i-L_i^+)^2}\leq 4\ex{}{(L^+_i-\ex{}{L_i^+})^2}=4\mathrm{Var}(L_i^+)$. Consequently, Corollary~\ref{cor:solve-ld-mi-bound} further implies the following bound:
\begin{align}
\label{ineq:ineq:var-bound}
    \abs{\generr}\leq\frac{1}{n}\sum_{i=1}^n\pr{2I(\Delta L_i;U_i)+2\sqrt{2\mathrm{Var}\pr{L_i^+}I(\Delta L_i;U_i)}}.
\end{align}
Notably, Eq.~(\ref{ineq:ineq:var-bound}) hints that if the variance of the single loss in the supersample is sufficiently small, the bound will decay at the same rate as in the realizable setting, namely $\mathcal{O}(I(\Delta L_i;U_i))$, as the second term vanishes.

Instead of solving Eq.~(\ref{ineq:key-bound}), 
if we directly upper-bound $\abs{\ex{}{G_i}}$ in Theorem~\ref{thm:ld-mi-linear}, 
we can also obtain the following bound from Theorem~\ref{thm:ld-mi-linear}.
\begin{cor}
    \label{cor:tv-mi-linear}
     Under the conditions of Theorem~\ref{thm:ld-mi-linear}, we have
    \[
    \abs{\generr}\leq\frac{1}{n}\sum_{i=1}^n\pr{\sqrt{2\ex{}{\Delta L_i^2}I(\Delta L_i;U_i)}+\sqrt{2\ex{U_i}{\tv{P_{\Delta L_i|U_i}}{P_{\Delta L_i}}}I(\Delta L_i;U_i)}}.
    \]
\end{cor}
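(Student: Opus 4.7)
The plan is to derive Corollary~\ref{cor:tv-mi-linear} directly from Theorem~\ref{thm:ld-mi-linear} in two elementary steps: a subadditivity split of the square-root bound, followed by a total-variation bound on $\abs{\ex{}{G_i}}$.

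First, I would apply $\sqrt{a+b}\le\sqrt{a}+\sqrt{b}$ (valid for $a,b\ge 0$) to the right-hand side of Theorem~\ref{thm:ld-mi-linear} with $a=2\ex{}{\Delta L_i^2}I(\Delta L_i;U_i)$ and $b=2\abs{\ex{}{G_i}}I(\Delta L_i;U_i)$. This immediately yields
\[
\abs{\generr}\le\frac{1}{n}\sum_{i=1}^n\left(\sqrt{2\ex{}{\Delta L_i^2}I(\Delta L_i;U_i)}+\sqrt{2\abs{\ex{}{G_i}}I(\Delta L_i;U_i)}\right),
\]
so the first summand already matches the corollary and the remaining task is to replace $\abs{\ex{}{G_i}}$ inside the second square root by $\ex{U_i}{\tv{P_{\Delta L_i|U_i}}{P_{\Delta L_i}}}$.

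Second, I would prove $\abs{\ex{}{G_i}}\le\ex{U_i}{\tv{P_{\Delta L_i|U_i}}{P_{\Delta L_i}}}$ by conditioning on $U_i$. Because $G_i=(-1)^{U_i}\Delta L_i$ and $U_i$ is uniform on $\{0,1\}$, iterated expectation gives $\ex{}{G_i}=\ex{U_i}{(-1)^{U_i}\ex{\Delta L_i|U_i}{\Delta L_i}}$, and since $\ex{U_i}{(-1)^{U_i}}=0$ I can subtract the constant $\ex{\Delta L_i}{\Delta L_i}$ from the inner expectation without changing its value. The triangle inequality then reduces the claim to bounding $\abs{\ex{\Delta L_i|U_i}{\Delta L_i}-\ex{\Delta L_i}{\Delta L_i}}$ pointwise in $U_i$, which by the standard dual representation of total variation together with the hypothesis $\abs{\Delta L_i}\le 1$ is at most $\tv{P_{\Delta L_i|U_i}}{P_{\Delta L_i}}$. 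Substituting this inequality into the split display above yields the corollary.

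I do not anticipate a substantive obstacle here, since all the heavy lifting was already done in Theorem~\ref{thm:ld-mi-linear}; the corollary is essentially a repackaging. The only delicate point is bookkeeping on the normalization of $\mathrm{D_{TV}}$: one must use the convention under which $\abs{\ex{P}{h}-\ex{Q}{h}}\le\tv{P}{Q}$ holds for every $h$ with $\abs{h}\le 1$, which is the convention implicit in the statement. With this convention fixed, no stray factor of two appears and the constants match exactly.
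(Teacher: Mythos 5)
Your proposal is correct and follows essentially the same route as the paper's proof: split the square root via subadditivity, then bound $\abs{\ex{}{G_i}}$ by centering with the independent-copy (zero-mean) term, applying Jensen/triangle inequality over $U_i$, and invoking the dual form of total variation with the test function $\Delta L_i$ bounded by $1$. Your remark on the normalization convention for $\mathrm{D_{TV}}$ matches the convention in the paper's Lemma~\ref{lem:tv-dual} with $M=1$, so the constants indeed agree.
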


Note that Corollary~\ref{cor:tv-mi-linear} is tighter than Corollary~\ref{cor:solve-ld-mi-bound} by Jensen's inequality and Pinsker's inequality. While we focus on the MI-based or KL-based generalization bound in this section, the inclusion of total variation (TV) in the second term of Corollary~\ref{cor:tv-mi-linear} expands the scope of the bound.  Notably, in a recent study by \cite{kuzborskij2024better}, a component of the form $\sqrt{\mathrm{D_{TV}}\mathrm{D_{KL}}}$ is also present in their generalization bounds. Specifically, they derive PAC-Bayesian generalization bounds using the Zhang-Cutkosky-Paschalidis (ZCP) divergence, initially explored in \cite{zhang2022optimal}. Interestingly, the ZCP divergence can be further upper bounded by $\mathcal{O}(\sqrt{\mathrm{D_{TV}}\mathrm{D_{KL}}}+\mathrm{D_{TV}})$.
Although it remains uncertain whether the first term in Corollary~\ref{cor:tv-mi-linear} is tighter than $\mathrm{D_{TV}}$ or not, we demonstrate that the component $\sqrt{\mathrm{D_{TV}}\mathrm{D_{KL}}}$ can emerge directly in the derivation of KL-based bounds, without using the ZCP divergence. Additionally, note that the term $\ex{}{\Delta L_i^2}$ in Corollary~\ref{cor:tv-mi-linear} can likewise be replaced by $4\mathrm{Var}(L_i^+)$ term.



\subsection{Other \texorpdfstring{$f$}---Information-based Generalization Bound}
Based on Lemma~\ref{lem:f-div-gen}, it is straightforward to apply similar techniques to many well-known $f$-divergences or $f$-information measures  to obtain generalization bounds. In this section, we discuss several other $f$-information-based bounds. Before we study the specific $f$-divergence,  we remark that, in Section~\ref{sec:MI-bound}, our focus was on the unconditional mutual information $I(\Delta L_i; U_i)$. As illustrated in \cite{wang2023tighter}, using the data-processing inequality (DPI) and the chain rule, we have \(I(\Delta L_i; U_i) \leq I(L_i^+, L_i^-; U_i) \leq I(W; U_i | \widetilde{Z}_i)\).  This makes it easy to derive other CMI variant-based bounds.
While other $f$-divergences also satisfy DPI, they may not adhere to the chain rule. Accordingly, for these cases, we use the disintegrated conditional $f$-information, defined as $I^z_{\phi}(X;Y)\triangleq \mathrm{D}_{\phi}(P_{XY|z}||P_{X|z}P_{Y|z})$ (noting that $I(X;Y|Z)=\ex{Z}{I^Z_{\phi}(X;Y))}$), instead of the unconditional quantity. In this context, at least $I_\phi(\Delta L_i; U_i|\widetilde{Z}_i)\leq I_\phi(W; U_i|\widetilde{Z}_i)$ still holds, ensuring that the hypothesis-based conditional $f$-information bound can be directly derived.

\paragraph{{\texorpdfstring{$\mathcal{X}^2$}--based Generalization Bound}} Consider $\phi(x)=(x-1)^2$. Its convex conjugate is $\phi^*(y)=\frac{y^2}{4}+y$. For $y\geq -2$, the inverse of conjugate function $\phi^{*-1}(z)=2(\sqrt{z+1}-1)$ exists. Since $2(\sqrt{z+1}-1)\geq \log(1+z)$, 
any lower bound that holds for $\log(1+z)$ will also hold for $2(\sqrt{z+1}-1)$. See Figure~\ref{fig:both}(Left) for a visualization. In other words, all the bounds in Section~\ref{sec:MI-bound} will also hold when using $\chi^2$-divergence. 
This can also be immediately noticed by the inequality $\kl{P}{Q}\leq\chi^2(P||Q)$ \cite{yury2022information}. We defer the formal disintegrated $\chi^2$-information bound in Appendix~\ref{sec:disint-mi-chi}.

\paragraph{Squared Hellinger (SH) Distance Generalization Bound} Consider $\phi(x)=(\sqrt{x}-1)^2$, and its convex conjugate is $\phi^*(y)=\frac{y}{1-y}$. For $y\in(-\infty,1)$, the inverse function $\phi^{*-1}(z)=\frac{z}{1+z}$ exists for $z\in(-1,+\infty)$. We then have the following oracle conditional squared Hellinger (SH)-information bound.
\begin{thm}
\label{thm:ld-hellinger-linear}
    Under the same conditions in Theorem~\ref{thm:ld-mi-linear}, we have
    \[
    \abs{\generr}\leq\frac{1}{n}\sum_{i=1}^n\mathbb{E}_{\widetilde{Z}_i}\sqrt{\pr{4\ex{}{\Delta L^2_i|\widetilde{Z}_i}+2\abs{\ex{}{G_i|\widetilde{Z}_i}}}I^{\widetilde{Z}_i}_{\rm H^2}(\Delta L_i;U_i)},
    \]
    where $I^{\tilde{z}_i}_{\rm H^2}(\Delta L_i;U_i)=\hesqr{P_{\Delta L_i,U_i|\tilde{z}_i}}{P_{\Delta L_i|\tilde{z}_i}P_{U_i}}$ is the (disintegrated) SH-information.
\end{thm}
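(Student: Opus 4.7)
The plan is to mimic the proof sketch of Theorem~\ref{thm:ld-mi-linear} with $\phi(x)=(\sqrt{x}-1)^2$ in place of $\phi(x)=x\log x+x-1$, working disintegratedly on $\widetilde{Z}_i$. The first step is to invoke Lemma~\ref{lem:f-div-gen} conditionally on $\widetilde{Z}_i$, with $X=\Delta L_i$, $Y=U_i$, and $f(\Delta L_i,U_i)=(-1)^{U_i}\Delta L_i$. Since $U_i$ is independent of $\widetilde{Z}_i$, the conditional independent copy $U_i'$ is still uniform on $\{0,1\}$, so $\ex{}{(-1)^{U_i'}\Delta L_i\mid\widetilde{Z}_i}=0$ as required. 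With $\phi^{*-1}(z)=\frac{z}{1+z}$ (valid for $z>-1$) and $G_i=(-1)^{U_i}\Delta L_i$, Lemma~\ref{lem:f-div-gen} yields
\[
\sup_{t}\ex{}{\frac{tG_i}{1+tG_i}\,\Big|\,\widetilde{Z}_i}\leq I^{\widetilde{Z}_i}_{\rm H^2}(\Delta L_i;U_i).
\]

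The second ingredient is a quadratic lower bound on $\phi^{*-1}$, analogous to the bound $\log(1+x)\geq x-ax^2$ used in the MI case. A direct computation gives
\[
\frac{x}{1+x}-(x-ax^2)=\frac{x^2(ax+a-1)}{1+x},
\]
which is nonnegative whenever $a\geq 1$ and $x\geq \frac{1}{a}-1$. Because $\abs{\Delta L_i}\leq 1$ implies $\abs{G_i}\leq 1$, restricting $t$ to $[\frac{1}{a}-1,\,1-\frac{1}{a}]$ enforces $tG_i\geq \frac{1}{a}-1$ pointwise, so the previous inequality upgrades to
\[
\sup_{\abs{t}\leq 1-\frac{1}{a}}\pr{t\,\ex{}{G_i\mid\widetilde{Z}_i}-a\,t^2\,\ex{}{G_i^2\mid\widetilde{Z}_i}}\leq I^{\widetilde{Z}_i}_{\rm H^2}(\Delta L_i;U_i).
\]
The unconstrained optimum is $t^\ast=\frac{\ex{}{G_i\mid\widetilde{Z}_i}}{2a\,\ex{}{G_i^2\mid\widetilde{Z}_i}}$, and choosing $a=1+\frac{\abs{\ex{}{G_i\mid\widetilde{Z}_i}}}{2\ex{}{G_i^2\mid\widetilde{Z}_i}}$ makes $\abs{t^\ast}=1-\frac{1}{a}$ exactly, so $t^\ast$ sits on the boundary of the admissible interval. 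Substituting this choice gives the pointwise analogue of Eq.~(\ref{ineq:key-bound}):
\[
\mathbb{E}^2[G_i\mid\widetilde{Z}_i]\leq \pr{4\,\ex{}{G_i^2\mid\widetilde{Z}_i}+2\,\abs{\ex{}{G_i\mid\widetilde{Z}_i}}}I^{\widetilde{Z}_i}_{\rm H^2}(\Delta L_i;U_i).
\]

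To finish, I would use $G_i^2=\Delta L_i^2$ to replace $\ex{}{G_i^2\mid\widetilde{Z}_i}$ by $\ex{}{\Delta L_i^2\mid\widetilde{Z}_i}$, take square roots, then integrate over $\widetilde{Z}_i$ via $\abs{\ex{}{G_i}}\leq\ex{\widetilde{Z}_i}{\abs{\ex{}{G_i\mid\widetilde{Z}_i}}}$, and average over $i=1,\dots,n$. The main obstacle, and the sole structural difference from the KL case, is the quadratic lower bound on $\frac{x}{1+x}$: because this function is weaker than $\log(1+x)$, the admissible region forces $a\geq 1$ rather than $a\geq\frac{1}{2}$, which is exactly what produces the factor $4$ (instead of $2$) in front of $\ex{}{\Delta L_i^2\mid\widetilde{Z}_i}$ in the stated bound.
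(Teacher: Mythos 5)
Your proposal is correct and follows essentially the same route as the paper's proof: conditioning on $\widetilde{Z}_i$, applying Lemma~\ref{lem:f-div-gen} with $\phi^{*-1}(z)=\frac{z}{1+z}$, lower-bounding by $x-ax^2$ with the identical choice $a=1+\frac{\abs{\ex{}{G_i\mid\widetilde{Z}_i}}}{2\ex{}{G_i^2\mid\widetilde{Z}_i}}$, verifying that $t^\ast$ lands on the boundary of the admissible interval, and finishing with Jensen's inequality. Your closing remark correctly identifies why the constant degrades from $2$ to $4$ relative to the KL case.
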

Given that $\hesqr{P}{Q}\leq\tv{P}{Q}$, the TV-based bound can be derived as a corollary from above. 

\paragraph{Jensen-Shannon (JS) Divergence Generalization Bound}
Consider $\phi(x)=x\log\frac{2x}{1+x}+\log\frac{2}{1+x}$, with the convex conjugate $\phi^*(y)=-\log(2-e^y)$. For $y<\log{2}$, the inverse function $\phi^{*-1}(z)=\log(2-e^{-z})$ exists for $z>-\log{2}$. This leads to the following oracle JS-information bound.

\begin{thm}
\label{thm:ld-js-linear}
    Under the same conditions in Theorem~\ref{thm:ld-mi-linear}, we have
    \[
    \abs{\generr}\leq\frac{2}{n}\sum_{i=1}^n\mathbb{E}_{\widetilde{Z}_i}\sqrt{\pr{4\ex{}{\Delta L^2_i|\widetilde{Z}_i}+\abs{\ex{}{G_i|\widetilde{Z}_i}}}I^{\widetilde{Z}_i}_{\rm JS}(\Delta L_i;U_i)},
    \]
    where $I^{\tilde{z}_i}_{\rm JS}(\Delta L_i;U_i)=\jsd{P_{\Delta L_i,U_i|\tilde{z}_i}}{P_{\Delta L_i|\tilde{z}_i}P_{U_i}}$ is the (disintegrated) JS-information.
\end{thm}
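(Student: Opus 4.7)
The plan is to mirror the proofs of Theorems~\ref{thm:ld-mi-linear} and~\ref{thm:ld-hellinger-linear} but specialized to the JS choice $\phi^{*-1}(z)=\log(2-e^{-z})$, which is defined only for $z>-\log 2$. First, I would apply the conditional version of Lemma~\ref{lem:f-div-gen} with $X=\Delta L_i$, $Y=U_i$, and $f(\Delta L_i,U_i)=(-1)^{U_i}\Delta L_i$, conditioned on $\widetilde{Z}_i$, restricting the range of $t$ to $(-\log 2,\log 2)$ so that $-tG_i>-\log 2$ for every $|G_i|\leq 1$. This yields
\[
I^{\widetilde{Z}_i}_{\rm JS}(\Delta L_i;U_i)\;\geq\;\sup_{t}\ex{}{\log\pr{2-e^{-tG_i}}\big|\widetilde{Z}_i}.
\]

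The analytical core is a quadratic lower bound $\log(2-e^{-x})\geq x-ax^2$ valid on a symmetric interval $[-r(a),r(a)]$, which plays the role that $\log(1+x)\geq x-ax^2$ did in the MI proof. Setting $g(x)=\log(2-e^{-x})-x+ax^2$, I observe $g(0)=g'(0)=0$ and $g''(x)=-\tfrac{2e^{-x}}{(2-e^{-x})^2}+2a$, whose minimum on $[-r,r]$ is attained at $x=-r$; hence whenever $a\geq\tfrac{e^{r}}{(2-e^{r})^{2}}$, $g$ is convex on $[-r,r]$ and so $g\geq g(0)=0$. This implicitly defines $r(a)$ through $a=\tfrac{e^{r(a)}}{(2-e^{r(a)})^2}$, a quantity that increases in $a$ from $r(1)=0$ to $r(\infty)=\log 2$.

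Plugging $x=tG_i$ (so $|x|\leq|t|$ because $|G_i|\leq 1$), for every $|t|\leq r(a)$ I obtain
\[
\ex{}{\log\pr{2-e^{-tG_i}}\big|\widetilde{Z}_i}\;\geq\;t\,\ex{}{G_i|\widetilde{Z}_i}-a\,t^2\,\ex{}{G_i^2|\widetilde{Z}_i}.
\]
The unconstrained maximizer is $t^\star=\tfrac{\ex{}{G_i|\widetilde{Z}_i}}{2a\,\ex{}{G_i^2|\widetilde{Z}_i}}$ with value $\tfrac{\ex{}{G_i|\widetilde{Z}_i}^2}{4a\,\ex{}{G_i^2|\widetilde{Z}_i}}$. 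I then pick the JS-tailored constant $a=4+\tfrac{\abs{\ex{}{G_i|\widetilde{Z}_i}}}{\ex{}{G_i^2|\widetilde{Z}_i}}$, so that $4a\,\ex{}{G_i^2|\widetilde{Z}_i}=4\pr{4\,\ex{}{G_i^2|\widetilde{Z}_i}+\abs{\ex{}{G_i|\widetilde{Z}_i}}}$, matching the target denominator.

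The hard part will be verifying that this $t^\star$ lies in the admissible range $|t^\star|\leq r(a)$, which reduces to the single-variable inequality $\tfrac{a-4}{2a}\leq r(a)$ for $a\geq 4$, equivalently $a\geq\tfrac{e^{(a-4)/(2a)}}{\pr{2-e^{(a-4)/(2a)}}^{2}}$; this is immediate at $a=4$ (RHS $=1$) and as $a\to\infty$ (RHS stays bounded by $\tfrac{e^{1/2}}{(2-e^{1/2})^{2}}$), and extends to all $a\geq 4$ by a direct monotonicity check on the difference of the two sides. Granted this, rearranging and using $\ex{}{G_i^2|\widetilde{Z}_i}=\ex{}{\Delta L_i^2|\widetilde{Z}_i}$ (since $G_i^2=\Delta L_i^2$) yields $\ex{}{G_i|\widetilde{Z}_i}^2\leq 4\pr{4\,\ex{}{\Delta L_i^2|\widetilde{Z}_i}+\abs{\ex{}{G_i|\widetilde{Z}_i}}}I^{\widetilde{Z}_i}_{\rm JS}(\Delta L_i;U_i)$. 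Taking square roots, then the expectation $\mathbb{E}_{\widetilde{Z}_i}$ via Jensen's inequality, and finally averaging over $i$ using $\abs{\generr}\leq\tfrac{1}{n}\sum_i\mathbb{E}_{\widetilde{Z}_i}\abs{\ex{}{G_i|\widetilde{Z}_i}}$ completes the proof.
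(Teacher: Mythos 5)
Your proposal is correct and follows essentially the same route as the paper's proof: the conditional form of Lemma~\ref{lem:f-div-gen} for the JS conjugate, the quadratic minorant $\log(2-e^{-x})\geq x-ax^2$ with the same choice $a=4+\abs{\ex{}{G_i|\widetilde{Z}_i}}/\ex{}{G_i^2|\widetilde{Z}_i}$, the same optimizer $t^\star=\ex{}{G_i|\widetilde{Z}_i}/(2a\,\ex{}{G_i^2|\widetilde{Z}_i})$, and the same final assembly via Jensen's inequality. The only (harmless) difference is how the minorant's validity region is certified: the paper's Lemma~\ref{lem:inequalities} fixes the interval $[-\tfrac{1}{2},\tfrac{1}{2}]$ for all $a\geq 4$ via a sign analysis of the first derivative, whereas you use a convexity criterion that yields a smaller, $a$-dependent interval $[-r(a),r(a)]$ and then check $\abs{t^\star}=\tfrac{a-4}{2a}\leq r(a)$ --- both arguments are valid and lead to the same constants.
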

Jeffrey's divergence \cite{jeffreys1946invariant}, which is defined as $\mathrm{D_{Jeffrey}}(P||Q)\triangleq\kl{P}{Q}+\kl{Q}{P}$, is an $f$-divergence with the convex function $\phi(x)=(x-1)\log{x}$. Since $\jsd{P}{Q}\leq\frac{1}{4}\mathrm{D_{Jeffrey}}(P||Q)$ \cite{Crooks2008InequalitiesBT}, a Jeffrey's divergence-based bound can be derived as a corollary from Theorem~\ref{thm:ld-js-linear} (and clearly, also from Theorem~\ref{thm:ld-mi-linear} due to the non-negativity of KL divergence).

Notably, Theorems~\ref{thm:ld-hellinger-linear}-\ref{thm:ld-js-linear} share similar expressions with Theorem~\ref{thm:ld-mi-linear}, and deriving analogous corollaries to Corollaries~\ref{cor:solve-ld-mi-bound}-\ref{cor:tv-mi-linear} is feasible (see Appendix~\ref{sec:disint-mi-chi}). 

In the proofs of Theorems~\ref{thm:ld-hellinger-linear} and Theorem~\ref{thm:ld-js-linear}, we continue to use $x - ax^2$ to lower bound the inverse of conjugate functions $\frac{x}{1+x}$ and $\log(2 - e^{-x})$ for SH-information and JS-information, respectively. However, due to the complexity of handling $\log(2 - e^{-x})$ in the context of JS-information, we opted for a much rougher selection of the parameter $a$ than in the proof of Theorem~\ref{thm:ld-mi-linear} for simplicity. Consequently, the constants in Theorem~\ref{thm:ld-js-linear} are not optimal and can be refined with a more fine-grained analysis.

\section{Extension to Unbounded Loss Difference}
\label{sec:unbounded-ld}
Previous bounds are typically applicable when dealing with bounded loss differences (albeit not necessarily within the range of $[-1,1]$). This limitation arises from the fact that the function $\phi^{*-1}$ may fail to exist in Lemma~\ref{lem:f-div-gen} when $f$  is unbounded, regardless of any non-trivial adjustments made to the range of $t$. 
To overcome this limitation, we now extend our analysis to the case of unbounded loss differences by refining Lemma~\ref{lem:f-div-gen}. One crucial modification to Lemma \ref{lem:f-div-gen} involves defining the measurable function $g=t\phi^{*-1}\cdot \mathbbm{1}_{\abs{f}\leq C}$ for some $C\geq 0$, where $\mathbbm{1}$ is the indicator function.

We present the following lemma tailored for unbounded functions.
\begin{lem}
\label{lem:f-div-gen-unbounded}
    Let $X$ be an arbitrary random variable, $\varepsilon$ be a Rademacher variable, and $t \in(-b,b)$.  Assume that there exists a constant $C\geq 0$ such that $\phi^{*}$ is invertible in $(-bC,bC)$. 
    If $\phi^{*}(0)=0$, then
    \[
    \sup_{t\in(-b,b)}\ex{X,\varepsilon}{\phi^{*-1}(t\varepsilon X)\cdot\mathbbm{1}_{\abs{X}\leq C}}\leq I_\phi(X;\varepsilon).
    \]
\end{lem}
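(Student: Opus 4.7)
The plan is to mimic the proof of Lemma~\ref{lem:f-div-gen} but with the truncated test function hinted at in the statement, so that $\phi^{*-1}$ is only ever evaluated inside the interval where it exists. I would start from the weak variational bound in Eq.~(\ref{ineq:weak-f-bound}), namely
\[
I_\phi(X;\varepsilon) \;\geq\; \ex{P_{X,\varepsilon}}{g(X,\varepsilon)} \;-\; \ex{P_X P_\varepsilon}{\phi^*(g(X,\varepsilon'))},
\]
for any measurable $g$ mapping into $\mathrm{dom}(\phi^*)$, and then specialize to
\[
g(x,e) \;\triangleq\; \phi^{*-1}(t\,e\,x)\cdot\mathbbm{1}_{|x|\leq C},
\]
for a fixed $t\in(-b,b)$.

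The first step is to verify that this $g$ is a legitimate test function. On $\{|x|\leq C\}$, the argument $t\,e\,x$ lies in $(-bC,bC)$, which is by hypothesis inside the invertibility region of $\phi^*$, so $\phi^{*-1}(t\,e\,x)\in\mathrm{dom}(\phi^*)$. On $\{|x|>C\}$, the indicator zeroes $g$, and the assumption $\phi^*(0)=0$ guarantees $0\in\mathrm{dom}(\phi^*)$. So the substitution is legal. Plugging $g$ into the first expectation gives exactly the left-hand side of the lemma's claim (for this fixed $t$).

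The second step is the crucial simplification of the ``CGF term'' under the product measure $P_X P_\varepsilon$. On $\{|x|\leq C\}$, applying $\phi^*$ undoes the inverse and yields $\phi^*(g(x,e'))=t\,e'\,x$; on $\{|x|>C\}$ it yields $\phi^*(0)=0$. Hence
\[
\ex{P_X P_\varepsilon}{\phi^*(g(X,\varepsilon'))}
\;=\; t\,\ex{\varepsilon'}{\varepsilon'}\,\ex{X}{X\,\mathbbm{1}_{|X|\leq C}} \;=\;0,
\]
where we use that $X$ and $\varepsilon'$ are independent under the product measure and that $\varepsilon'$ is Rademacher with mean zero. Combining the two steps gives, for every $t\in(-b,b)$,
\[
I_\phi(X;\varepsilon) \;\geq\; \ex{P_{X,\varepsilon}}{\phi^{*-1}(t\varepsilon X)\,\mathbbm{1}_{|X|\leq C}},
\]
and taking the supremum over $t\in(-b,b)$ on the right-hand side yields the lemma.

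There is no serious obstacle; the whole argument is a careful bookkeeping of where $\phi^{*-1}$ is defined. The one subtlety worth highlighting in the write-up is that the ``zeroing out'' of the $\phi^*$ term relies on two structural facts working in tandem, the Rademacher mean-zero property replacing the assumption $\ex{X,Y'}{f(X,Y')}=0$ from Lemma~\ref{lem:f-div-gen}, and the assumption $\phi^*(0)=0$ ensuring that the truncated region contributes nothing rather than an uncontrolled remainder. This is precisely why the lemma is stated for a Rademacher $\varepsilon$ rather than for a generic $Y$, and why the truncation is by indicator rather than by a smoother cutoff.
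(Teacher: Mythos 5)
Your proof is correct and follows essentially the same route as the paper's: plug the truncated test function $g=\phi^{*-1}(t\varepsilon X)\cdot\mathbbm{1}_{|X|\leq C}$ into the weak variational bound, use invertibility on $(-bC,bC)$ together with $\phi^*(0)=0$ to evaluate $\phi^*(g)$ piecewise, and kill the resulting term via the mean-zero Rademacher variable under the product measure. Your write-up is in fact slightly more careful than the paper's in checking that $g$ maps into $\mathrm{dom}(\phi^*)$ on both regions, but there is no substantive difference in the argument.
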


Note that $\phi^{*}(0)=0$ is satisfied by all divergences discussed in Section~\ref{sec:generalization-bounds}. Armed with Lemma~\ref{lem:f-div-gen-unbounded}, our proof strategy hinges on separately bounding the truncated terms $\ex{}{G_i\cdot\mathbbm{1}_{\abs{X}\leq C}}$ and $\ex{}{G_i\cdot\mathbbm{1}_{\abs{X}> C}}$. To achieve this, we utilize Lemma~\ref{lem:f-div-gen-unbounded} along with similar lower-bounding techniques as detailed in Section~\ref{sec:generalization-bounds} to bound the first term. For the second term, we invoke H{\"o}lder's inequality, drawing inspiration from prior works such as \cite{jiao2017dependence,esposito2020robust}.

Before presenting the refined bound for unbounded loss difference, we introduce some additional notations. Define the $L_p$ norm of a random variable $X\sim P$ as
$
||X||_{p}\triangleq\left\{
\begin{aligned}
    &\pr{\mathbb{E}_{P}{|X|^p}}^{\frac{1}{p}} &  &\text{for } p\in[1,\infty), \\
    &\esssup |X|&  &\text{for } p=\infty,
\end{aligned}
\right.
$
where $\esssup |X|\triangleq\inf\{M:P(|X|>M)=0\}$ is the essential supremum. Furthermore, let the $f$-divergence generated by the convex function $\phi_\alpha(x)=\abs{x-1}^\alpha$ be $\mathrm{D}_{\phi_\alpha}(P||Q)\triangleq \ex{Q}{\pr{\frac{dP}{dQ}-1}^\alpha}$. This divergence, which takes $\chi^2$-divergence and total variation as special cases (with $\alpha=2$ and $\alpha=1$, respectively), and it has been utilized in previous information-theoretic generalization bounds \cite{jiao2017dependence,lugosi2022generalization,lugosi2023online}.
We denote the corresponding $f$-information of $\mathrm{D}_{\phi_\alpha}(P||Q)$ as $I_{\phi_\alpha}$.

Now, we are ready to present a MI-based bound, with straightforward extensions to other $f$-information measures.

\begin{thm}
\label{thm:ld-mi-unbounded}
    For constants $C \geq 0$, $q\geq 1$, and $\alpha,\beta\in[1,+\infty]$ such that $\frac{1}{\alpha}+\frac{1}{\beta}=1$, denote $\zeta_1=\sqrt{2\pr{\ex{}{\Delta L^2_i\mathbbm{1}_{\abs{\Delta L_i}\leq C}}+C\abs{\ex{}{G_i\mathbbm{1}_{\abs{\Delta L_i}\leq C}}}}}$ and $\zeta_2=\pr{P\pr{\abs{\Delta L_i}>C}}^{\frac{q-1}{q\beta}}||\Delta L_i||_{q\beta}$, we have
    \[
    \abs{\generr}\leq\inf_{C,q,\alpha,\beta}\frac{1}{n}\sum_{i=1}^n\pr{\zeta_1\sqrt{I(\Delta L_i;U_i)}+\zeta_2\sqrt[\uproot{5} \alpha]{I_{\phi_\alpha}(\Delta L_i;U_i))}}.
    \]
\end{thm}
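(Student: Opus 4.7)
The plan is to decompose each expected gap as
\[
\mathbb{E}[G_i] = \mathbb{E}\!\left[G_i\mathbbm{1}_{|\Delta L_i|\leq C}\right] + \mathbb{E}\!\left[G_i\mathbbm{1}_{|\Delta L_i|>C}\right],
\]
and bound the two pieces by the $\zeta_1$ and $\zeta_2$ terms, respectively. The bounded (truncated) piece is controlled by Lemma~\ref{lem:f-div-gen-unbounded} together with the quadratic lower bound on $\log(1+x)$ used in Theorem~\ref{thm:ld-mi-linear}; the unbounded (tail) piece is controlled by a double Hölder argument in the spirit of \cite{jiao2017dependence}, using the $f$-divergence generated by $\phi_\alpha$ as the change-of-measure cost.

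For the first piece, I would instantiate Lemma~\ref{lem:f-div-gen-unbounded} with $\phi(x)=x\log x + x -1$, so that $\phi^{*-1}(z)=\log(1+z)$, choosing $X=\Delta L_i$, $\varepsilon=(-1)^{U_i}$, and truncation level $C$, to obtain
\[
\sup_{t\in(-b,b)}\mathbb{E}\!\left[\log\!\left(1+t(-1)^{U_i}\Delta L_i\right)\mathbbm{1}_{|\Delta L_i|\leq C}\right]\leq I(\Delta L_i;U_i).
\]
Next I would invoke the inequality $\log(1+x)\geq x-ax^2$ valid for $|x|\leq 1-\tfrac{1}{2a}$ (with $a\geq \tfrac{1}{2}$) used in the proof of Theorem~\ref{thm:ld-mi-linear}. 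On the support of the indicator $|t(-1)^{U_i}\Delta L_i|\leq |t|C$, so admissibility becomes $|t|C\leq 1-\tfrac{1}{2a}$. Setting
\[
a=\frac{C\,|\mathbb{E}[G_i\mathbbm{1}_{|\Delta L_i|\leq C}]|}{2\,\mathbb{E}[\Delta L_i^2\mathbbm{1}_{|\Delta L_i|\leq C}]}+\frac{1}{2}
\]
ensures that the unconstrained optimizer $t^\star=\mathbb{E}[G_i\mathbbm{1}_{|\Delta L_i|\leq C}]/(2a\,\mathbb{E}[\Delta L_i^2\mathbbm{1}_{|\Delta L_i|\leq C}])$ lies in the admissible range. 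Substituting $t^\star$ gives the oracle inequality $\mathbb{E}[G_i\mathbbm{1}_{|\Delta L_i|\leq C}]^2\leq \zeta_1^2\, I(\Delta L_i;U_i)$, hence $|\mathbb{E}[G_i\mathbbm{1}_{|\Delta L_i|\leq C}]|\leq \zeta_1\sqrt{I(\Delta L_i;U_i)}$.

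For the tail piece, because $U_i$ is independent of $\Delta L_i$ under the product law $Q=P_{\Delta L_i}P_{U_i}$ and $\mathbb{E}[(-1)^{U_i}]=0$, the truncated $G_i$ has zero mean under $Q$, so with $P=P_{\Delta L_i,U_i}$,
\[
\mathbb{E}[G_i\mathbbm{1}_{|\Delta L_i|>C}]=\mathbb{E}_Q\!\left[\left(\tfrac{dP}{dQ}-1\right)G_i\mathbbm{1}_{|\Delta L_i|>C}\right].
\]
Hölder's inequality with conjugate exponents $(\alpha,\beta)$ bounds the magnitude by $I_{\phi_\alpha}(\Delta L_i;U_i)^{1/\alpha}\,\|G_i\mathbbm{1}_{|\Delta L_i|>C}\|_{\beta,Q}$. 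Since $|G_i|=|\Delta L_i|$, the $L_\beta(Q)$ norm is purely a function of $\Delta L_i$, and a second Hölder step with exponent $q$ applied to $|\Delta L_i|^\beta\cdot\mathbbm{1}_{|\Delta L_i|>C}$ factors out the exceedance probability, giving $\|G_i\mathbbm{1}_{|\Delta L_i|>C}\|_{\beta,Q}\leq \zeta_2$. Triangle inequality, summing over $i$, dividing by $n$, and minimizing over $(C,q,\alpha,\beta)$ delivers the stated bound.

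The main obstacle I expect is the first piece: certifying that the particular choice of $a$ (which now depends on $C$) keeps $t^\star$ inside the shrunken admissible interval $\bigl[-\tfrac{1-1/(2a)}{C},\tfrac{1-1/(2a)}{C}\bigr]$, because the truncation scale $C$ couples nontrivially with the admissibility range of the lower bound $\log(1+x)\geq x-ax^2$. Once this interaction is verified, the rest of the argument is a direct adaptation of Theorem~\ref{thm:ld-mi-linear} plus the routine double-Hölder tail control.
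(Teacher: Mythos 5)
Your proposal matches the paper's proof essentially step for step: the same truncation decomposition, the same application of Lemma~\ref{lem:f-div-gen-unbounded} with the quadratic lower bound $\log(1+x)\geq x-ax^2$ and the same choice of $a$, and the same zero-mean-under-$Q$ plus double-H\"older argument for the tail. The admissibility concern you flag resolves exactly as in Theorem~\ref{thm:ld-mi-linear}: with your $a$ one has $2a\,\mathbb{E}[\Delta L_i^2\mathbbm{1}_{|\Delta L_i|\leq C}]=C\,|\mathbb{E}[G_i\mathbbm{1}_{|\Delta L_i|\leq C}]|+\mathbb{E}[\Delta L_i^2\mathbbm{1}_{|\Delta L_i|\leq C}]$, so $|t^\star|$ coincides with the endpoint $\frac{2a-1}{2aC}$ of the admissible interval and is therefore attained.
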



Since $U_i$ is a Bernoulli random variable, according to \cite[Lemma~1]{jiao2017dependence}, we know that $I_{\phi_\alpha}(\Delta L_i; U_i) < 1 + 2^{\alpha-1}$ for $\alpha \in [1, 2]$ and remains bounded for other values of $\alpha$ as well. Thus, Theorem~\ref{thm:ld-mi-unbounded} generally preserves the boundedness property of the original CMI bound unless $\Delta L_i$ has an infinite $L_{q\beta}$-norm. Additionally, due to truncation, $\zeta_1\leq \sqrt{2}C$ for any given $C$, while $\zeta_2$ heavily depends on the tail behavior of $\Delta L_i\sim P_{\Delta L_i}$ (or $G_i \sim P_{U'_i}P_{\Delta L_i}$ equivalently). We now discuss several common cases.

\paragraph{Bounded Loss} Notably, Theorem~\ref{thm:ld-mi-unbounded} also covers the bounded loss cases. For instance, if $\ell$ is bounded in $[0,1]$ a.s., setting $C=1$ in Theorem~\ref{thm:ld-mi-unbounded} leads to $\zeta_2 = 0$, directly recovering Theorem~~\ref{thm:ld-mi-linear} for bounded loss. In fact, the choice of $C=1$ might not necessarily be the optimal for $\ell\in[0,1]$. Particularly, for certain $C\in(0,1)$, we let $\alpha=1$ and $\beta=\infty$, then $\zeta_2=||\Delta L_i||_{\infty}$ and $(I_{\phi_\alpha}(\Delta L_i;U_i))^{1/\alpha}=I_{\rm TV}(\Delta L_i;U_i)$. Since $\mathrm{D_{TV}}\lesssim \sqrt{\mathrm{D_{KL}}}$ by Pinsker's inequality, this alternative choice holds potential for even tighter bounds than Theorem~\ref{thm:ld-mi-linear}.

\paragraph{``Almost Bounded'' Loss} 
When the loss function exhibits sub-Gaussian or sub-Gamma tail behaviors, the probability $P(\abs{G_i}>C)$ decays rapidly. By carefully selecting the threshold $C$, the dominance of the first term in the bound can be expected. 
Assume the loss difference
$\Delta L_i$ is sub-Gaussian with certain variance proxy, say, $\sigma$. Setting $C= \sigma$ and $q=2$, we have $P(\abs{\Delta L_i}>C)\leq 2e^{-1}$ and $||\Delta L_i||_{q\beta}\lesssim \sqrt{q\beta}\sigma$ \cite{vershynin2018high}.
Consequently, each term in the summation of Theorem~\ref{thm:ld-mi-unbounded} simplifies to $\mathcal{O}\pr{\sigma\sqrt{ I(\Delta L_i;U_i)}+\sigma(2e)^{\frac{1-\alpha}{2\alpha}}\sqrt{\frac{\alpha}{\alpha-1}}\sqrt[\uproot{5} \alpha]{I_{\phi_\alpha}(\Delta L_i;U_i)}}$. 
However, the relationship between $I(\Delta L_i;U_i)$ and $I_{\phi_\alpha}(\Delta L_i;U_i)$ is not clear beyond the cases of $\alpha=1$ and $\alpha=2$ (corresponding to total variation and $\chi^2$-information, respectively). Studying the overall behavior of the bound represents an intriguing direction for future research. Furthermore, consider the alternative case with $C=0$ and $q=1$, where the first term in Theorem~\ref{thm:ld-mi-unbounded} becomes zero. The second term, using $||\Delta L_i||_{\beta}\lesssim \sqrt{\beta}\sigma$ for sub-gaussian random variables, becomes $\mathcal{O}(\sigma\sqrt{\frac{\alpha}{\alpha-1}}\sqrt[\uproot{5} \alpha]{I_{\phi_\alpha}(\Delta L_i;U_i)})$. Compared to existing MI bounds, e.g., $\mathcal{O}(\sigma\sqrt{ I(\Delta L_i;U_i)})$, we know from $\tv{P}{Q}\lesssim\sqrt{\kl{P}{Q}}$ and  $\kl{P}{Q}\leq\chi^2(P||Q)$  that $I_{\phi_1}(\Delta L_i;U_i)\lesssim \sqrt{ I(\Delta L_i;U_i)}\lesssim \sqrt{I_{\phi_2}(\Delta L_i;U_i)}$, suggesting that some $\alpha \in (1,2)$ may yield a tighter bound than MI.

\paragraph{Heavy-tailed Loss} Heavy-tailed losses, although lacking universally agreed definitions,  typically refer to the cases where the moment-generating function (MGF) does not exists (away from $0$) \cite{hellstrom2023generalization}. Remarkably, our Theorem~\ref{thm:ld-mi-unbounded} remains meaningful for such losses since it does not rely on the existence of the MGF of the loss function. Specifically, without any additional knowledge about the tail behavior of $\Delta L_i$, we have the following result from Theorem~\ref{thm:ld-mi-unbounded}.
\begin{cor}
    \label{cor:mi-unbounded-mkv}
    Under the conditions in Theorem~\ref{thm:ld-mi-unbounded}, let $\gamma=\frac{q-1}{q\beta}$, we have
    \[
    \abs{\generr}\leq\frac{1}{n}\sum_{i=1}^n\pr{\gamma^{\frac{1}{\gamma+1}}\!+\!\gamma^{\frac{-\gamma}{\gamma+1}}}\pr{\sqrt{2I(\Delta L_i;U_i)}}^{\frac{\gamma}{\gamma+1}}\pr{||\Delta L_i||^\gamma_{1}||\Delta L_i||_{q\beta}\sqrt[\uproot{8} \alpha]{I_{\phi_\alpha}(\Delta L_i;U_i)}}^{\frac{1}{\gamma+1}}.
    \]
\end{cor}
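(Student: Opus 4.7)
The strategy I would take is to specialize Theorem~\ref{thm:ld-mi-unbounded} by plugging in crude, distribution-free upper bounds on $\zeta_1$ and $\zeta_2$ expressed as powers of $C$, and then minimizing the resulting expression analytically in $C$. For $\zeta_1$, on the truncation event $\{\abs{\Delta L_i}\leq C\}$ one has $\Delta L_i^2 \leq C\abs{\Delta L_i}$ and $\abs{G_i}=\abs{\Delta L_i}\leq C$; carefully combining these inside $\zeta_1^2 = 2(\ex{}{\Delta L_i^2 \mathbbm{1}_{\abs{\Delta L_i}\leq C}} + C\abs{\ex{}{G_i \mathbbm{1}_{\abs{\Delta L_i}\leq C}}})$ yields $\zeta_1\leq \sqrt{2}\,C$, the bound already asserted in the discussion right after Theorem~\ref{thm:ld-mi-unbounded}. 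For $\zeta_2$, Markov's inequality gives $P(\abs{\Delta L_i}>C)\leq ||\Delta L_i||_1/C$, so $\zeta_2\leq (||\Delta L_i||_1/C)^{\gamma}\,||\Delta L_i||_{q\beta}$ with $\gamma=(q-1)/(q\beta)$.

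Putting these together, each summand in Theorem~\ref{thm:ld-mi-unbounded} is at most $A_i C + B_i C^{-\gamma}$, where $A_i\triangleq \sqrt{2\,I(\Delta L_i;U_i)}$ and $B_i\triangleq ||\Delta L_i||_1^{\gamma}\,||\Delta L_i||_{q\beta}\,\sqrt[\alpha]{I_{\phi_\alpha}(\Delta L_i;U_i)}$. I would then minimize $h(C)=A_i C + B_i C^{-\gamma}$ over $C>0$: setting $h'(C)=A_i-\gamma B_i C^{-\gamma-1}=0$ gives $C^{*}=(\gamma B_i/A_i)^{1/(\gamma+1)}$, and substituting back yields
\[
h(C^{*}) = \left(\gamma^{\frac{1}{\gamma+1}}+\gamma^{-\frac{\gamma}{\gamma+1}}\right) A_i^{\gamma/(\gamma+1)} B_i^{1/(\gamma+1)},
\]
which, after averaging over $i$, is precisely the bound stated in Corollary~\ref{cor:mi-unbounded-mkv}.

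The main obstacle is the first step: the bound $\zeta_1\leq \sqrt{2}\,C$ is sharper than the naive $\zeta_1\leq 2C$ that results from bounding $\ex{}{\Delta L_i^2\mathbbm{1}}$ and $C\abs{\ex{}{G_i\mathbbm{1}}}$ separately by $C^2$, so extracting the stated constant requires some joint manipulation of the two terms under the truncation rather than treating them independently. Once that truncation inequality is in hand, the rest is a routine single-variable calculus optimization, and the precise algebraic form of the corollary drops out cleanly from the minimum of the generic expression $AC+BC^{-\gamma}$.
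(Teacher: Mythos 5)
Your proposal matches the paper's proof essentially step for step: bound $\zeta_2$ by Markov's inequality as $\|\Delta L_i\|_1^{\gamma}\|\Delta L_i\|_{q\beta}C^{-\gamma}$, use $\zeta_1\le\sqrt{2}\,C$, and minimize $AC+BC^{-\gamma}$ in closed form at $C^{*}=(\gamma B/A)^{1/(\gamma+1)}$ to obtain exactly the stated constant $\gamma^{1/(\gamma+1)}+\gamma^{-\gamma/(\gamma+1)}$. The one step you flag as an obstacle---establishing $\zeta_1\le\sqrt{2}\,C$ rather than the naive $2C$---is not given any additional justification in the paper either (its proof simply recalls the assertion made in the discussion after Theorem~\ref{thm:ld-mi-unbounded}), so your argument is no less complete than the original.
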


It's worth noting that as $\gamma\to 0$ (i.e. $q\to 1$), we can see that $\pr{\gamma^{\frac{1}{\gamma+1}}+\gamma^{\frac{-\gamma}{\gamma+1}}} \to 1$. Consequently, in Corollary~\ref{cor:mi-unbounded-mkv}, the bound simplifies to $\frac{1}{n}\sum_{i=1}^n\pr{||\Delta L_i||_{\beta}\sqrt[\uproot{5} \alpha]{I_{\phi_\alpha}(\Delta L_i;U_i)}}$. This is an extension of \cite[Theorem~3]{jiao2017dependence}, incorporating individual techniques \cite{bu2019tightening} and loss-difference methods \cite{wang2023tighter} in the supersample setting. 
Notably, our Corollary~\ref{cor:mi-unbounded-mkv} improves upon \cite[Theorem~3]{jiao2017dependence}, 
as $\gamma\to 0$ may not necessarily be the optimal choice.
Furthermore, it is also intriguing to investigate whether our Corollary~\ref{cor:mi-unbounded-mkv} can
recover \cite[Corollary~5]{lugosi2022generalization}. 
\section{Numerical Results}
\label{sec:experiment}
In this section, we conduct an empirical comparison between our novel conditional $f$-information generalization bounds and several existing information-theoretic generalization bounds. Our experimental setup closely aligns with the settings in \cite{wang2023tighter}.  In particular, we undertake two distinct prediction tasks as follows: 1) \textit{Linear Classifier on Synthetic Gaussian Dataset}, where we train a simple linear classifier using a synthetic Gaussian dataset; 2) \textit{CNN and ResNet-50 on Real-World Datasets}. The second task follows the same deep learning training protocol as in \cite{harutyunyan2021informationtheoretic}, involves training a $4$-layer CNN on a binary MNIST dataset (``$4$ vs $9$'') \cite{lecun2010mnist} and fine-tuning a ResNet-50 model \cite{he2016deep}, pretrained on ImageNet \cite{deng2009imagenet}, on CIFAR10 \cite{krizhevsky2009learning}. In both tasks, we assess prediction error as our performance metric. That is, we utilize the zero-one loss function to compute generalization error. Note that during training, we use the cross-entropy loss as a surrogate to enable optimization with gradient-based methods.

\begin{figure}[!ht]
    \begin{subfigure}[b]{0.33\textwidth}
    \centering
\includegraphics[scale=0.31]{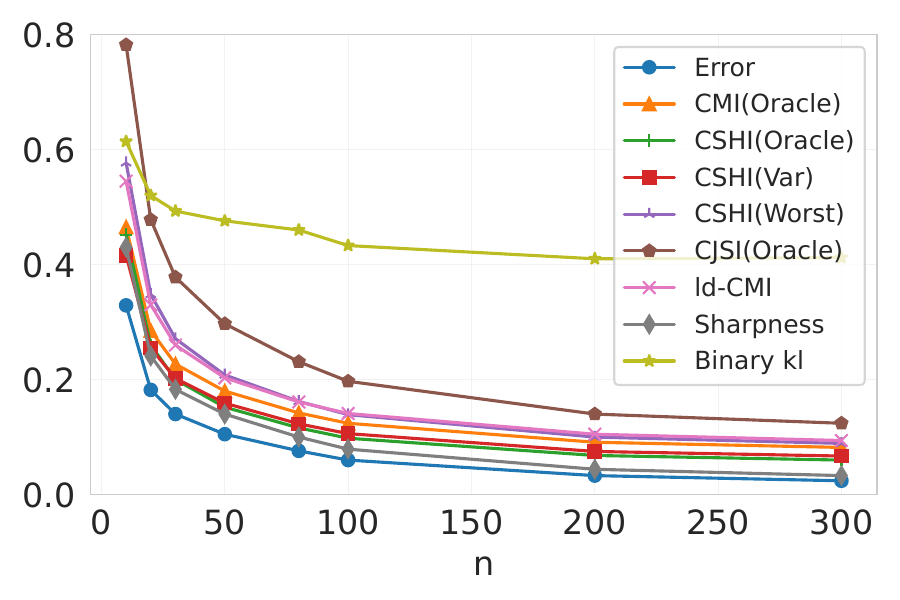}            
\caption{Linear (Two)}            
\label{fig:linear-two}
    \end{subfigure}
    \begin{subfigure}[b]{0.33\textwidth}
    \centering
\includegraphics[scale=0.31]{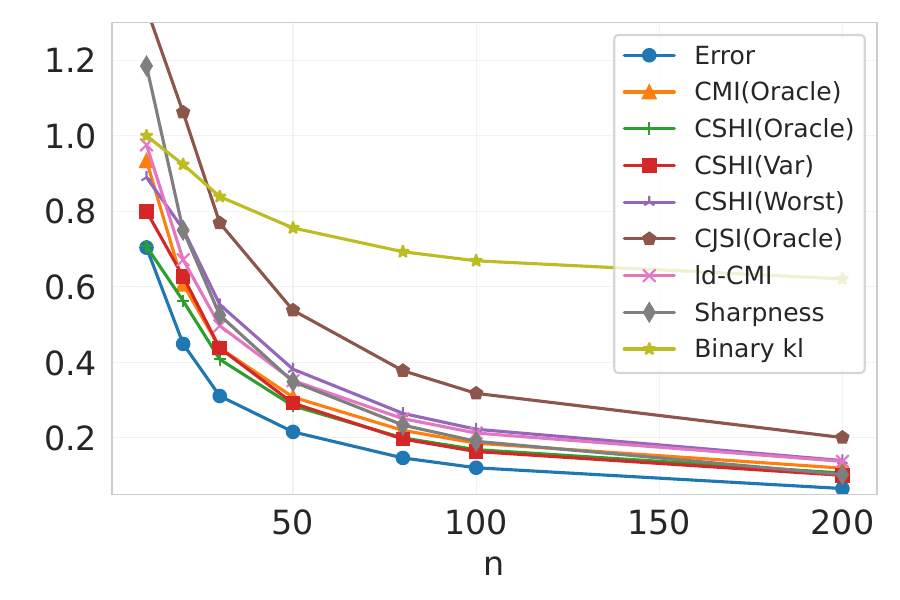}            
\caption{Linear (Ten)}            
\label{fig:linear-ten}
    \end{subfigure}
    \begin{subfigure}[b]{0.33\textwidth}
    \centering
\includegraphics[scale=0.31]{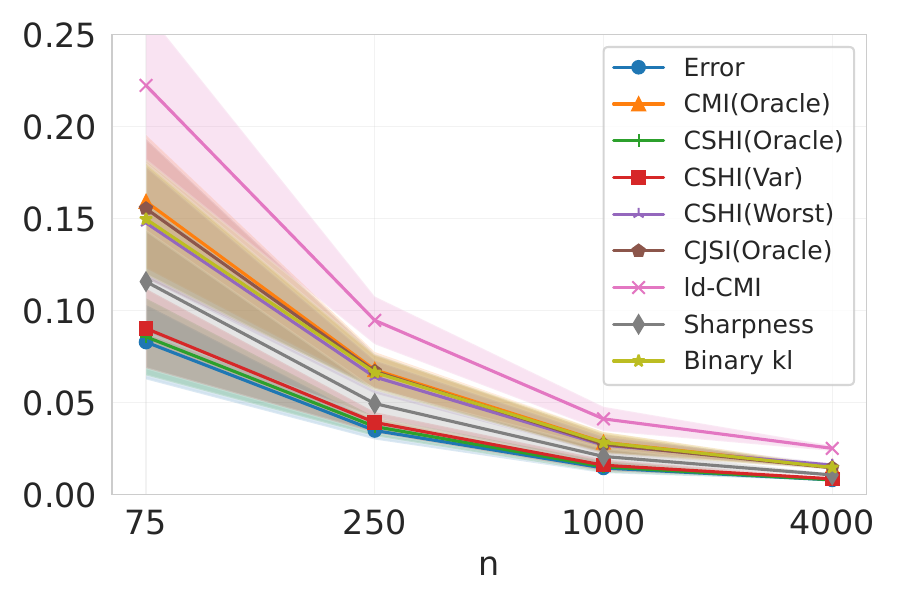}            
\caption{CNN (MNIST)}            
\label{fig:cnn-mnist}
    \end{subfigure}
    
\begin{subfigure}[b]{0.33\textwidth}
\centering
\includegraphics[scale=0.31]{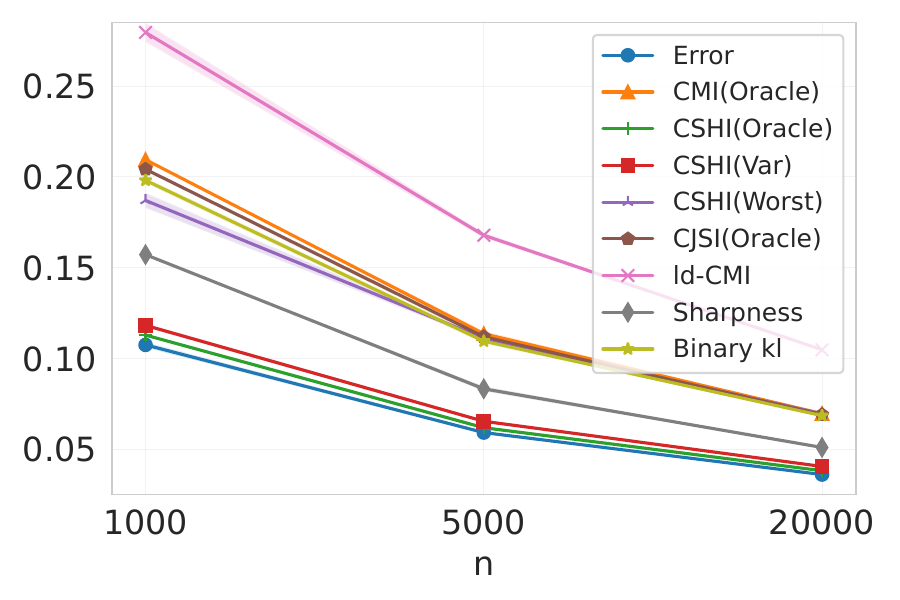}
\caption{ResNet (CIFAR10)}
    \label{fig:resnet-cifar}
\end{subfigure}
 \begin{subfigure}[b]{0.33\textwidth}
 \centering
\includegraphics[scale=0.31]{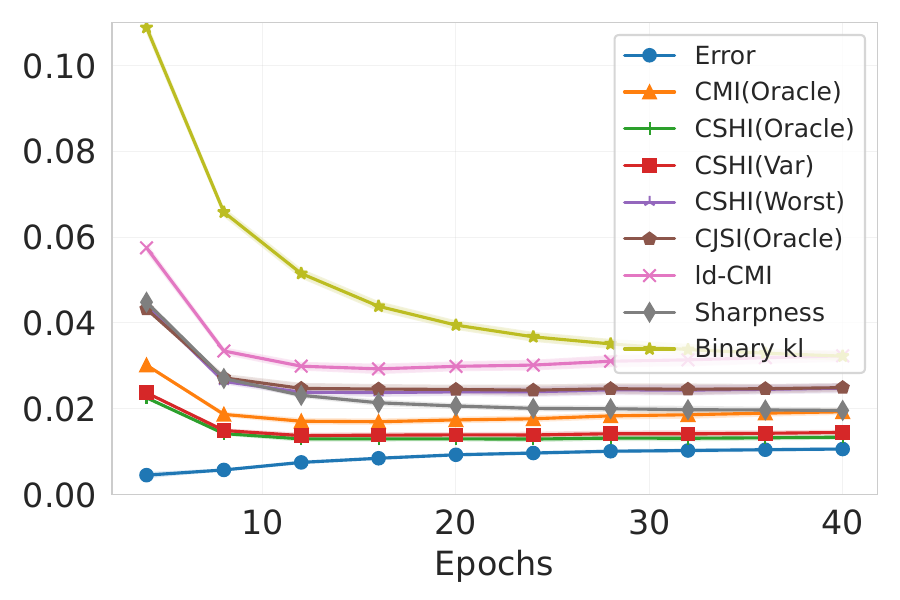}
\caption{SGLD (MNIST)}
\label{fig:sgld-mnist}
    \end{subfigure}
\begin{subfigure}[b]{0.33\textwidth}
\centering
\includegraphics[scale=0.31]{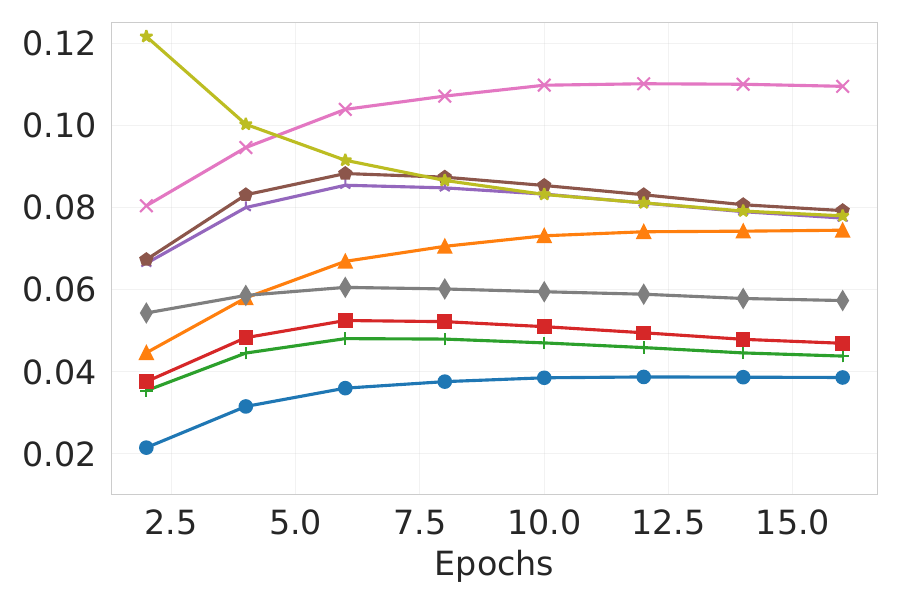}
\caption{SGLD (CIFAR10)}
\label{fig:zoom-in}
\end{subfigure}
\caption{Comparison of bounds on synthetic dataset, MNIST ("$4$ vs $9$"), and CIFAR10. (a-b) Linear classification for two-class and ten-class data. (c-d) Dynamics of generalization bounds as dataset size changes. (e-f) Dynamics of generalization bounds during SGLD training.}\label{fig:Real-data}
\end{figure}

The comparison of generalization bounds mainly focuses on those presented in Section~\ref{sec:generalization-bounds}. Specifically, we consider the oracle bounds, given in Theorem~\ref{thm:ld-mi-linear}, Theorem~\ref{thm:ld-hellinger-linear}, and Theorem~\ref{thm:ld-js-linear}, which we denote as \textit{CMI(Oracle)}, \textit{CSHI(Oracle)}, and \textit{CJSI(Oracle)}, respectively.
Given that the squared Hellinger-information is a tighter measure than MI, and the constants in Theorem~\ref{thm:ld-hellinger-linear} exhibit less pessimism compared to those in the JS-information bound, we introduce two additional squared Hellinger-information bounds for comparison. The first one, akin to Corollary~\ref{cor:tv-mi-linear}, is $\generr\leq \frac{1}{n}\sum_{i=1}^n\mathbb{E}\sqrt{\pr{4\mathbb{E}{[\Delta L^2_i]}+2\ex{}{\tv{P_{\Delta L_i|U_i,\widetilde{Z}_i}}{P_{\Delta L_i|\widetilde{Z}_i}}}}I^{\widetilde{Z}_i}_{\rm H^2}(\Delta L_i;U_i)}$ (cf.~Corollary~\ref{cor:ld-hellinger-tv} in Appendix~\ref{sec:disint-mi-chi}), denoted as \textit{CSHI(Var)}. The second one adopts a more pessimistic choice by replacing $\mathbb{E}{[\Delta L^2_i]}$ with the constant $1$, labeled as \textit{CSHI(Worst)}.
Additionally, we include three previous information-theoretic generalization bounds as baselines: the original ld-CMI bound in \cite[Theorem3.1]{wang2023tighter} (\textit{ld-CMI}), the sharpness-based single-loss MI bound in \cite[Theorem~4.5]{wang2023tighter} (\textit{Sharpness}), and the binary KL-based CMI bound in \cite[Theorem~5]{hellstrom2022a} (\textit{Binary kl}). 
Further details on experimental settings are available in the Appendix~\ref{sec:experiments-appendix}.

The final results are shown in Figure~\ref{fig:Real-data}. First, we observe that for the linear classifier on the two-class data (Figure~\ref{fig:linear-two}), most bounds are close to the exact generalization error, with \textit{Sharpness} being the tightest. Note that the sharpness-based MI bound in \cite[Theorem~4.5]{wang2023tighter} is applicable solely for zero-one loss. As the prediction task becomes more complex (Figures~\ref{fig:linear-ten}-\ref{fig:zoom-in}), our squared Hellinger-information-based bounds are the tightest. Specifically, the squared Hellinger-information-based bound is always tighter than the CMI bound in the oracle type. Notably, \textit{CSHI(Var)} also outperforms all other $f$-information-based bounds in these cases, and even the pessimistic \textit{CSHI(Worst)} bound outperforms them, except for \textit{Sharpness}, in Figures~\ref{fig:cnn-mnist}-\ref{fig:resnet-cifar}.
Additionally, by comparing \textit{CMI(Oracle)} and \textit{ld-CMI}, we see significant potential for improving the original ld-CMI bound. Finally, although the JS-information bound contains relatively pessimistic constants, it still outperforms some MI-based bounds in many cases (e.g., Figures~\ref{fig:cnn-mnist}-\ref{fig:sgld-mnist}). These observations suggest that considering alternative $f$-information rather than MI is indeed necessary for studying generalization.

\section{Other Related Works and Limitations}
\label{sec:related-limitation}
In addition to the studies mentioned previously, there have been recent developments in information-theoretic generalization bounds in the supersample setting. For example, \cite{rammal2022on,haghifam2022understanding} propose a ``leave-one-out'' (LOO) construction of supersamples, reducing the size of the supersample from $2n$ to $n+1$. 
Additionally, \cite{Wang2023sampleconditioned} constructs a ``neighboring-hypothesis'' matrix, enabling the derivation of a hypothesis-conditioned CMI bound. The key component in this bound is the product of the new CMI term and an algorithmic stability parameter.
Furthermore, significant progress has been made in understanding MI or CMI as generalization measures in stochastic convex optimization (SCO) problems. Specifically, \cite{haghifam2022limitations,livni2023information,attias2024information} all point out that MI or CMI bounds can be non-vanishing in some SCO examples, while the exact generalization error is vanishing. Notably, \cite{attias2024information} discovers a ``CMI-accuracy tradeoff'' phenomenon in certain scenarios: For the generalization error to be upper bounded by a vanishing parameter, the original hypothesis-based CMI must be lower bounded by the reciprocal of this parameter (or even the reciprocal of its square), indicating that the CMI bound cannot characterize the learnability of such problems. Although this limitation is currently only observed for hypothesis-based CMI, it is anticipated to hold in other variants of CMI, such as ld-CMI.

Our work also contributes to understanding the potential looseness of previous CMI generalization bounds. However, the current state of our bounds cannot explain or resolve the limitation presented in \cite{attias2024information} in a non-trivial way. Nonetheless, our framework provides some intuition that, as long as a gap exists between $\phi^{*-1}$ and $x-ax^2$ (see Figure~\ref{fig:both}(Right)), there will always be room for constructing counterexamples that challenge information-theoretic measures, although such examples may be rare. Another limitation of this work is the lack of a high-probability generalization guarantee. There are two potential directions to address this: either by applying similar techniques to the framework in \cite{grunwald2021pac}, which is the PAC-Bayesian counterpart of the CMI bound, or by combining it with recent information-bottleneck generalization bounds \cite{kawaguchi2023does,dong2024rethinking}.


\begin{ack}
This work is supported partly by an NSERC Discovery grant. 
The authors would like to thank the anonymous AC and reviewers for their careful reading and valuable suggestions.
\end{ack}


\bibliographystyle{unsrtnat}
\bibliography{ref}



\newpage
\begin{appendices}

\section{Technical Lemmas}
\begin{lem}[Donsker and Varadhan's variational formula]
\label{lem:DV-KL}
Let $Q$, $P$ be probability measures on $\Theta$, for any bounded measurable function $f:\Theta\rightarrow \mathbb{R}$, we have
$
\mathrm{D_{KL}}(Q||P) = \sup_{f} \ex{\theta\sim Q}{f(\theta)}-\log\ex{\theta\sim P}{\exp{f(\theta)}}.
$
\end{lem}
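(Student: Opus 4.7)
The plan is to establish the claimed equality by two one-sided inequalities: first, that the right-hand-side functional is bounded above by $\mathrm{D_{KL}}(Q\|P)$ for every bounded measurable $f$, and second, that the supremum over such $f$ actually attains (or at least approaches) $\mathrm{D_{KL}}(Q\|P)$. The core idea is the Gibbs tilting trick paired with non-negativity of KL divergence, followed by a truncation argument for the matching lower bound.

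For the upper-bound direction, I would introduce the tilted probability measure $P_f$ defined by
\begin{equation*}
\frac{dP_f}{dP}(\theta) \;=\; \frac{e^{f(\theta)}}{Z_f}, \qquad Z_f \;:=\; \ex{\theta\sim P}{e^{f(\theta)}},
\end{equation*}
which is a bona fide probability measure since $f$ is bounded, so $0<Z_f<\infty$. The case $\mathrm{D_{KL}}(Q\|P)=+\infty$ is trivial, so I may assume $Q\ll P$; then $Q\ll P_f$ as well and
\begin{equation*}
\log\frac{dQ}{dP_f} \;=\; \log\frac{dQ}{dP} \;-\; f \;+\; \log Z_f.
\end{equation*}
Integrating against $Q$ and using $\mathrm{D_{KL}}(Q\|P_f)\geq 0$ yields
\begin{equation*}
0 \;\leq\; \mathrm{D_{KL}}(Q\|P) \;-\; \ex{\theta\sim Q}{f(\theta)} \;+\; \log\ex{\theta\sim P}{e^{f(\theta)}},
\end{equation*}
which rearranges to the desired $\ex{Q}{f}-\log\ex{P}{e^f}\leq \mathrm{D_{KL}}(Q\|P)$. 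Taking supremum over bounded measurable $f$ gives one direction.

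For the matching lower bound, the formal optimizer is $f^\star:=\log\frac{dQ}{dP}$, under which $\ex{Q}{f^\star}-\log\ex{P}{e^{f^\star}} = \mathrm{D_{KL}}(Q\|P) - \log 1 = \mathrm{D_{KL}}(Q\|P)$. Since $f^\star$ need not be bounded, I would approximate by the truncations $f_n:=\max(-n,\min(n, f^\star))$ (and, if desired, mollify further so that $f_n$ is genuinely bounded measurable). Applying dominated/monotone convergence separately to $\ex{Q}{f_n}\to \ex{Q}{f^\star}=\mathrm{D_{KL}}(Q\|P)$ and to $\ex{P}{e^{f_n}}\to \ex{P}{e^{f^\star}}=1$ (splitting into the positive and negative parts of $f^\star$ to avoid integrability issues) gives $\ex{Q}{f_n}-\log\ex{P}{e^{f_n}}\to \mathrm{D_{KL}}(Q\|P)$, so the supremum is at least $\mathrm{D_{KL}}(Q\|P)$ and equality holds.

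The main obstacle is the limiting argument in the second direction: when $\mathrm{D_{KL}}(Q\|P)=+\infty$ or when $f^\star$ is heavy-tailed under $Q$ or $P$, one must argue carefully that $\ex{P}{e^{f_n}}\to 1$ (monotone convergence on $\{f^\star\geq 0\}$ combined with dominated convergence on $\{f^\star<0\}$ using the $P$-integrable dominator $1$) while simultaneously $\ex{Q}{f_n}$ sweeps out all of $\mathrm{D_{KL}}(Q\|P)$ (monotone convergence applied to the positive and negative parts $(f^\star)_+$ and $(f^\star)_-$ under $Q$, using that $\ex{Q}{(f^\star)_-}<\infty$ whenever $\mathrm{D_{KL}}(Q\|P)$ is well-defined). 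Everything else is routine algebra around the Gibbs measure identity.
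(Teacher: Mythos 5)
The paper states Lemma~\ref{lem:DV-KL} as a known technical result and gives no proof of its own, so there is nothing to compare against; judged on its own terms, your argument is the standard and correct one. The upper bound via the tilted (Gibbs) measure $P_f$ together with $\mathrm{D_{KL}}(Q\|P_f)\ge 0$ is exactly right, and the attainment argument via truncations $f_n$ of $f^\star=\log\frac{dQ}{dP}$ is handled with the right care: monotone convergence on $\{f^\star\ge 0\}$ and domination by the constant $1$ on $\{f^\star<0\}$ give $\ex{P}{e^{f_n}}\to 1$, while the uniform bound $\ex{Q}{(f^\star)_-}\le e^{-1}$ (since $x\mapsto -x\log x$ is bounded on $(0,1)$) justifies the convergence of $\ex{Q}{f_n}$ even when the divergence is infinite. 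The only loose end is the lower-bound direction when $Q\not\ll P$: there $f^\star$ does not exist, so the truncation scheme does not apply, and you should instead take $f=c\,\mathbbm{1}_A$ for a set $A$ with $P(A)=0<Q(A)$, which gives $\ex{Q}{f}-\log\ex{P}{e^{f}}=cQ(A)\to\infty$. That is a one-line patch, not a structural flaw.
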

\begin{lem}
\label{lem:tv-dual}
    Let $P$ and $Q$ be two probability measures on the same metric space $\mathcal{X}$. Let $f$ be some measurable function associated with $\mathcal{X}$. The total variation distance between $P$ and $Q$ can be written as
\[
\tv{P}{Q}=\frac{1}{M}\sup_{||f||_\infty\leq M}\abs{\ex{P}{f(X)}-\ex{Q}{f(X)}}.
\]
\end{lem}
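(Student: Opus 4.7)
The plan is to prove the identity by a standard duality argument for total variation, taking care that the paper's normalization of $\tv{P}{Q}$ is the $f$-divergence one generated by $\phi(x)=|x-1|$, i.e.\ $\tv{P}{Q}=\int |dP-dQ|$ (no $1/2$), which explains the prefactor $1/M$ rather than $1/(2M)$. The first move is a homogeneity reduction: if $\|f\|_\infty\le M$ then $g\triangleq f/M$ satisfies $\|g\|_\infty\le 1$ and $|\ex{P}{f}-\ex{Q}{f}|=M\,|\ex{P}{g}-\ex{Q}{g}|$, so it suffices to establish $\tv{P}{Q}=\sup_{\|g\|_\infty\le 1}|\ex{P}{g(X)}-\ex{Q}{g(X)}|$.

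For the upper bound, I introduce the signed measure $\nu\triangleq P-Q$ and rewrite $\ex{P}{g}-\ex{Q}{g}=\int g\,d\nu$. Then $|\int g\,d\nu|\le \|g\|_\infty\cdot |\nu|(\mathcal{X})$, and by the chosen normalization $|\nu|(\mathcal{X})=\int|dP/d\mu - dQ/d\mu|\,d\mu=\tv{P}{Q}$ for any common dominating $\mu$ (existence of which is standard, e.g.\ $\mu=(P+Q)/2$). Taking the supremum over $\|g\|_\infty\le 1$ gives $\sup_g|\ex{P}{g}-\ex{Q}{g}|\le \tv{P}{Q}$.

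For the matching lower bound, I invoke the Hahn decomposition of $\nu$: there exist disjoint measurable sets $A^+,A^-$ with $A^+\cup A^-=\mathcal{X}$ on which $\nu$ is non-negative and non-positive, respectively. Setting $g^\star=\mathbf{1}_{A^+}-\mathbf{1}_{A^-}$ yields $\|g^\star\|_\infty=1$ and
\[
\ex{P}{g^\star(X)}-\ex{Q}{g^\star(X)}=\int g^\star\,d\nu=\nu(A^+)-\nu(A^-)=|\nu|(\mathcal{X})=\tv{P}{Q},
\]
so the supremum at $M=1$ is attained, and multiplying through by $M$ (using the homogeneity reduction) completes the proof.

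The main obstacle is essentially bookkeeping rather than a substantive difficulty: one must check that the paper's convention for $\tv{\cdot}{\cdot}$ (inherited from the $f$-divergence framework with $\phi(x)=|x-1|$) is consistent with $|\nu|(\mathcal{X})$ rather than $\tfrac12|\nu|(\mathcal{X})$, which is precisely what produces the $1/M$ (instead of $1/(2M)$) prefactor. A minor point is to ensure measurability of the optimizing $g^\star$, which is automatic from the Hahn decomposition; no absolute continuity of $P$ with respect to $Q$ is needed.
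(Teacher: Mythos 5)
Your proof is correct; the paper states Lemma~\ref{lem:tv-dual} as a technical fact without any proof, so there is nothing to diverge from, and your Hahn-decomposition duality argument (upper bound via $|\int g\,d\nu|\leq \|g\|_\infty |\nu|(\mathcal{X})$, lower bound via $g^\star=\mathbf{1}_{A^+}-\mathbf{1}_{A^-}$, plus the homogeneity reduction in $M$) is the standard and complete way to establish it. Your care with the normalization is the one genuinely delicate point: the identity as stated, with prefactor $1/M$ rather than $1/(2M)$, holds exactly when $\tv{P}{Q}$ denotes $\int|dP-dQ|=|\nu|(\mathcal{X})$, i.e.\ the $f$-divergence generated by $\phi(x)=|x-1|$ (consistent with the paper's $\mathrm{D}_{\phi_\alpha}$ at $\alpha=1$), and you correctly identify and resolve this.
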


\begin{lem}
\label{lem:inequalities}
    The following inequalities hold:
    \begin{align}
        \log(1+x)\geq& x-ax^2 \qquad\text{ for } a \geq \frac{1}{2} \text{ and } x\in\br{\frac{1}{2a}-1, 1-\frac{1}{2a}},\label{ineq:kl-use}\\
        \frac{x}{1+x}\geq& x-ax^2 \qquad\text{ for } a \geq 1 \text{ and } x\in\br{\frac{1}{a}-1,1-\frac{1}{a}},\label{ineq:hellinger-use}\\
        \log(2-e^{-x})\geq& x-ax^2 \qquad\text{ for } a \geq 4 \text{ and } x\in[-\frac{1}{2},\frac{1}{2}].\label{ineq:jsd-use}
    \end{align}
\end{lem}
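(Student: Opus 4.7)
The plan for Lemma~\ref{lem:inequalities} is to define the difference $h(x) := \mathrm{LHS} - \mathrm{RHS}$ in each case, observe that $h(0)=0$, and establish that $x=0$ is the global minimum on the specified interval. The three inequalities succumb to slightly different analyses, with (\ref{ineq:jsd-use}) being the only one that requires real work.

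For (\ref{ineq:kl-use}), I would compute $h_1'(x) = \frac{1}{1+x}-1+2ax$ and rearrange it to the single fraction $h_1'(x) = \frac{x\bigl[2a(1+x)-1\bigr]}{1+x}$. On the interval $[\tfrac{1}{2a}-1,\,1-\tfrac{1}{2a}]$ the denominator is positive (as the left endpoint is strictly greater than $-1$) and the bracket $2a(1+x)-1$ is nonnegative by the left endpoint constraint, so $\mathrm{sign}(h_1'(x))=\mathrm{sign}(x)$. Hence $h_1$ decreases on the negative part and increases on the positive part, attaining its minimum $h_1(0)=0$.

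For (\ref{ineq:hellinger-use}), the cleanest route skips differentiation: placing $h_2$ over the common denominator $1+x$ yields $h_2(x) = \frac{x^2\bigl[(a-1)+ax\bigr]}{1+x}$. On $[\tfrac{1}{a}-1,\,1-\tfrac{1}{a}]$ with $a\geq 1$ one has $1+x\geq \tfrac{1}{a}>0$ and $(a-1)+ax\geq (a-1)+a(\tfrac{1}{a}-1)=0$, so every factor of $h_2$ is nonnegative and $h_2\geq 0$ follows immediately.

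Inequality (\ref{ineq:jsd-use}) is the main obstacle: a direct calculation gives $h_3''(x)=2a-g(x)$ with $g(x):=\frac{2e^{-x}}{(2-e^{-x})^2}$, and one can check (by direct differentiation) that $g$ is strictly decreasing on $(-\infty,\log 2)$. When $a=4$, $g$ already exceeds $2a$ near $x=-\tfrac{1}{2}$, so no global convexity argument is available. On $[0,\tfrac{1}{2}]$ the easy half works: $g(x)\leq g(0)=2\leq 2a$ gives $h_3''\geq 0$, so $h_3'$ increases through $h_3'(0)=0$ and $h_3\geq 0$. On $[-\tfrac{1}{2},0]$ I would exploit the fact that $h_3''$ is itself increasing (since $-g$ is), making $h_3'$ convex, so its maximum on $[-\tfrac{1}{2},0]$ is attained at an endpoint. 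Since $h_3'(0)=0$ and a direct evaluation gives $h_3'(-\tfrac{1}{2})=\frac{2(e^{1/2}-1)}{2-e^{1/2}}-a\leq \frac{2(e^{1/2}-1)}{2-e^{1/2}}-4<0$ for every $a\geq 4$, one concludes $h_3'\leq 0$ on $[-\tfrac{1}{2},0]$. Hence $h_3$ is nonincreasing there and $h_3(x)\geq h_3(0)=0$. The decisive numerical step is this endpoint check at $x=-\tfrac{1}{2}$, and it is precisely this check that pins down the constant $a\geq 4$ in the statement.
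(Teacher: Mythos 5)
Your proposal is correct and follows essentially the same route as the paper's proof: analyze the difference function in each case via its derivatives, use the algebraic factorization $\frac{x^2[(a-1)+ax]}{1+x}$ for the Hellinger inequality, and reduce the Jensen--Shannon case to the endpoint evaluation $h_3'(-\tfrac12)\approx 3.69-a<0$, which is exactly the paper's decisive step. Your packaging is in fact slightly cleaner in two places: the direct sign analysis of $h_1'(x)=\frac{x\,[2a(1+x)-1]}{1+x}$ avoids the paper's loose claim that the second derivative is nonnegative on the whole interval, and noting that $h_3'$ is convex (since $h_3''$ is increasing) collapses the paper's two-case analysis of the JS inequality into a single endpoint check.
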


\begin{proof}
We prove these inequalities separately.
    \paragraph{We first prove Eq.~(\ref{ineq:kl-use})} We analyze the function $f(x) = \log(1+x) - x + ax^2$. By computing the first derivative $f'(x)$:
   \[
   f'(x) = \frac{1}{1+x} - 1 + 2ax= \frac{x(2a - 1 + 2ax)}{1+x}.
   \]
   Let $f'(x) = 0$, we have $x(2a - 1 + 2ax) = 0$, which gives two critical points:
   \[
   x = 0 \quad \text{and} \quad x = \frac{1 - 2a}{2a}.
   \]
   Note that $x = \frac{1 - 2a}{2a}$ is within the interval $\br{\frac{1}{2a} - 1, 1 - \frac{1}{2a}}$. 
  Then computing the second derivative $f''(x)$:
   \[
   f''(x) =  \frac{-1}{(1+x)^2} + 2a.
   \]
   Since $a \geq \frac{1}{2}$ and $-1<x<1$, we have $f''(x) \geq 0$, indicating that $f(x)$ is either concave upwards or linear at these critical points.
 It is easy to check that the minimum value $f(0)=0$, so $f(x)\geq 0$ in this interval.
   Hence, $\log(1+x) \geq x - ax^2$ holds for $a \geq \frac{1}{2}$ and $x \in \left[\frac{1}{2a} - 1, 1 - \frac{1}{2a}\right]$.

   \paragraph{We then prove Eq.~(\ref{ineq:hellinger-use})} We analyze the function $f(x) = \frac{x}{1+x} - x + ax^2$. By simplifying this function, we have:
   \[
   f(x) = \frac{x^2(a-1 + ax)}{1+x}.
   \]
   
   Notice that $a-1 \geq 0$ and  $1-a\leq ax\leq a-1$, then it is easy to see that the numerator $x^2(a-1 + ax)$ is always non-negative for $x \in \left[\frac{1}{a} - 1, 1 - \frac{1}{a}\right]$ and $a \geq 1$.
   In addition, the denominator $1+x$ is always positive for $x \in \left[\frac{1}{a} - 1, 1 - \frac{1}{a}\right]$.
   Therefore, $f(x) = \frac{x^2((a-1) + ax)}{1+x} \geq 0$.

Consequently, we have shown that $\frac{x}{1+x} \geq x - ax^2$ for $a \geq 1$ and $x \in \left[\frac{1}{a} - 1, 1 - \frac{1}{a}\right]$.

\paragraph{We then prove Eq.~(\ref{ineq:jsd-use})} We analyze the function $f(x) = \log(2 - e^{-x}) - x + ax^2$. By taking its derivative, we have
\[ 
f'(x) = \frac{1}{2e^{x} - 1} - 1 + 2ax.
\]
We now demonstrate that
the function 
$f'(x) > 0$ for $x \in \left[0, \frac{1}{2}\right]$ and $f'(x)\leq 0$ for $x \in \left[-\frac{1}{2}, 0\right]$. 

The second derivative of $f(x)$ is
\[
f''(x) = \frac{-2e^x}{(2e^{x} - 1)^2} + 2a.
\]

Let $t=2e^x$, since $x\in[-\frac{1}{2},\frac{1}{2}]$, we know that $t\in[2e^{-\frac{1}{2}},2e^{\frac{1}{2}}]$. Hence, we reformulate $f''(x)$ as
\[
f''(t)=\frac{-t}{(t-1)^2}+2a=\frac{-1}{t+\frac{1}{t}-2}+2a.
\]

Notice that $t+\frac{1}{t}$ is monotonically increasing when $t>1$, so $f''(t)$ is monotonically increasing in $t\in[2e^{-\frac{1}{2}},2e^{\frac{1}{2}}]$. Consequently,
\[
f''(t)\geq f''(t=2e^{-\frac{1}{2}})\approx -26.7+2a.
\]

Additionally, it is clear that $f''(x=0)=f''(t=2e^0)=-2+2a>0$ for $a\geq 4$. WWe now consider two cases:
\begin{itemize}
    \item[Case 1.] If $a>13.35$, then $f''(t)\geq f''(t=2e^{-\frac{1}{2}})\geq 0$. Hence, $f'(x)$ is monotonically increasing in $x\in[-\frac{1}{2},\frac{1}{2}]$. Since $f'(x=0)=0$, we demonstrate that $f'(x) > 0$ for $x \in \left[0, \frac{1}{2}\right]$ and $f'(x)\leq 0$ for $x \in \left[-\frac{1}{2}, 0\right]$. 
    \item[Case 2.] If $a\leq 13.35$, then there exists an $x_0$ such that $f''(x_0)\leq 0$. Recall that $f''(x=0)=f''(t=2e^0)=-2+2a>0$ and $f''(x)$ is monotonically increasing in $x\in[-\frac{1}{2},\frac{1}{2}]$, we know that such $x_0\in[-\frac{1}{2},0]$ and $f'(x) \geq 0$ for all $x \in \left[0, \frac{1}{2}\right]$. Thus, $f'(x)$ first decreases in $x\in[-\frac{1}{2},x_0]$ then increases in $x\in[x_0,\frac{1}{2}]$. 

    Since $f'(x=-\frac{1}{2})=\frac{1}{2e^{-\frac{1}{2}}}-1-a\approx 3.69 -a<0$ for $a\geq 4$, we conclude that $f'(x) > 0$ for $x \in \left[0, \frac{1}{2}\right]$ and $f'(x)\leq 0$ for $x \in \left[-\frac{1}{2}, 0\right]$. 
\end{itemize}

Consequently, we know that $f(x)$ decreases in $x\in[-\frac{1}{2},0]$ and increases in $[0,\frac{1}{2}]$. By checking $f(x)\geq f(0)=0$,  the inequality $\log(2-e^{-x}) \geq x - ax^2$ holds for $a \geq 4$ and $x \in \left[-\frac{1}{2}, \frac{1}{2}\right]$.



This completes the proof.
\end{proof}


\section{Omitted Proofs in Section~\ref{sec:generalization-bounds} and Additional Results}

\subsection{Proof of Theorem~\ref{thm:ld-mi-linear}}
\label{sec:proof-mi}
\begin{proof}
    Lemma~\ref{lem:f-div-gen} indicates that 
    \begin{align*}
        I(\Delta L_i;U_i)\geq \sup_{t\in(-1,+\infty)}\ex{\Delta L_i,U_i}{\log\pr{1+ t(-1)^{{U}_i}\Delta L_i}}.
    \end{align*}

    Recall that $G_i=(-1)^{{U}_i}\Delta L_i$, and let $a=\frac{\abs{\ex{}{G_i}}}{2\ex{}{G_i^2}}+\frac{1}{2}$. 
    Since $a\geq \frac{1}{2}$ and  $1-\frac{1}{2a}\geq 0$, we can apply Eq.~(\ref{ineq:kl-use}) in Lemma~\ref{lem:inequalities} to have $f(x)=\log(1+x)-x+ax^2\geq 0$ for  $x\in[\frac{1}{2a}-1,1-\frac{1}{2a}]$.
    
    Hence, 
    by $G_i\in[-1,1]$, we restrict $t\in[\frac{1}{2a}-1,1-\frac{1}{2a}]$ and let $x=tG_i$ to have the following inequality,
    \begin{align}
        \sup_{t>-1}\ex{}{\log\pr{1+tG_i}}\geq \sup_{t\in[\frac{1}{2a}-1,1-\frac{1}{2a}]}\ex{}{tG_i-at^2G_i^2}=\frac{\mathbb{E}^2[G_i]}{4a\ex{}{G_i^2}},
    \end{align}
    where the last equality holds when $t^*=\frac{\ex{}{G_i}}{2a\ex{}{G_i^2}}$.

    We now show that $t^*$ can indeed be reached within $t\in[\frac{1}{2a}-1,1-\frac{1}{2a}]$.
    
    Substituting $a=\frac{\abs{\ex{}{G_i}}}{2\ex{}{G_i^2}}+\frac{1}{2}$, we can see that $t^*=\frac{\ex{}{G_i}}{\abs{\ex{}{G_i}}+\ex{}{G_i^2}}$. In addition, notice that
    \begin{align*}
        1-\frac{1}{2a}=\frac{\abs{\ex{}{G_i}}}{\abs{\ex{}{G_i}}+\ex{}{G_i^2}} \quad\text{ and } \frac{1}{2a}-1=\frac{-\abs{\ex{}{G_i}}}{\abs{\ex{}{G_i}}+\ex{}{G_i^2}}.
    \end{align*}
    
    Clearly, either $\ex{}{G_i}=-\abs{\ex{}{G_i}}$ or $\ex{}{G_i}=\abs{\ex{}{G_i}}$ holds. Consequently, $t^*$ can be achieved at one of the endpoints of $[\frac{1}{2a}-1,1-\frac{1}{2a}]$.

    Therefore,
    \begin{align*}
        I(\Delta L_i;U_i)\geq \sup_{t\in(-1,+\infty)}\ex{\Delta L_i,U_i}{\log\pr{1+ t(-1)^{{U}_i}\Delta L_i}}\geq \frac{\mathbb{E}^2[G_i]}{4a\ex{}{G_i^2}},
    \end{align*}
    which is equivalent to
    \begin{align}
    \label{ineq:key-comp-kl}
        \mathbb{E}^2[G_i]\leq 4a\ex{}{G_i^2}I(\Delta L_i;U_i)=&2\pr{\abs{\ex{}{G_i}}+\ex{}{\Delta L_i^2}}I(\Delta L_i;U_i).
    \end{align}
     Then, by Jensen's inequality,
    \begin{align*}
        \generr\leq&\frac{1}{n}\sum_{i=1}^n\abs{\ex{\Delta L_i,U_i}{G_i}}\\
        \leq&\frac{1}{n}\sum_{i=1}^n\sqrt{2\pr{\abs{\ex{}{G_i}}+\ex{}{\Delta L_i^2}}I(\Delta L_i;U_i)},
    \end{align*}
    where the last inequality is by Eq.~(\ref{ineq:key-comp-kl}).
    
    This completes the proof.
\end{proof}

\subsection{Proof of Corollary~\ref{cor:solve-ld-mi-bound}}
\begin{proof}
Recall Eq.~(\ref{ineq:key-bound}) (or Eq.~(\ref{ineq:key-comp-kl}) equivalently),
\begin{align*}
        \mathbb{E}^2[G_i]\leq 2\pr{\abs{\ex{}{G_i}}+\ex{}{G_i^2}}I(\Delta L_i;U_i).
    \end{align*}
    Notice that $\abs{\ex{}{G_i}}\geq 0$, we now solve the inequality above for $\abs{\ex{}{G_i}}$. Let $A_1=\ex{}{G_i^2}$ and $A_2=I(\Delta L_i;U_i)$, then by the quadratic-root formula, the solution to $\mathbb{E}^2[G_i]-2A_2\abs{\ex{}{G_i}}-2A_1A_2\leq 0$ is
    \[
    0\leq \abs{\ex{}{G_i}}\leq \frac{2A_2+\sqrt{4A_2^2+8A_1A_2}}{2}=A_2+\sqrt{A_2^2+2A_1A_2}.
    \]
    
    Therefore, we have
    \begin{align*}
        \abs{\ex{}{G_i}}\leq& I(\Delta L_i;U_i) + \sqrt{\pr{I(\Delta L_i;U_i)}^2+2\ex{}{G_i^2}I(\Delta L_i;U_i)}\\
        \leq& 2I(\Delta L_i;U_i) + \sqrt{2\ex{}{G_i^2}I(\Delta L_i;U_i)},
    \end{align*}
    where the second inequality is by $\sqrt{x+y}\leq \sqrt{x}+\sqrt{y}$.
    
    The remaining steps are routine, which will complete the proof.
\end{proof}

\subsection{Proof of Corollary~\ref{cor:tv-mi-linear}}
\begin{proof}
    Based on Theorem~\ref{thm:ld-mi-linear}, by $\sqrt{x+y}\leq \sqrt{x}+\sqrt{y}$, we have
    \begin{align}
        \abs{\generr}\leq\frac{1}{n}\sum_{i=1}^n\pr{\sqrt{2\ex{}{G^2_i}I(\Delta L_i;U_i)}+\sqrt{2\abs{\ex{}{G_i}}I(\Delta L_i;U_i)}}.
        \label{ineq:sqrt-decompose}
    \end{align}
    Let $G'_i=(-1)^{U_i}\Delta L'_i$, where $\Delta L'_i$ is an independent copy of $\Delta L_i$. Notice that $\ex{}{G'_i}=0$. Then,
    \[
    \abs{\ex{\Delta L_i, U_i}{G_i}}=\abs{\ex{\Delta L_i, U_i}{G_i}-\ex{\Delta L'_i, U_i}{G'_i}}\leq \ex{U_i}{\abs{\ex{\Delta L_i|U_i}{G_i}-\ex{\Delta L'_i}{G'_i}}},
    \]
    where the last inequality is by Jensen's inequality.

    Notice that $G_i\leq 1$, by the dual form of total variation (cf. Lemma~\ref{lem:tv-dual}), we have
    \[
    \ex{U_i}{\abs{\ex{\Delta L_i|U_i}{G_i}-\ex{\Delta L'_i}{G'_i}}}\leq \ex{U_i}{\tv{P_{\Delta L_i|U_i}}{P_{\Delta L_i}}}.
    \]
    Plugging the above into Eq.~(\ref{ineq:sqrt-decompose}) will complete the proof.
\end{proof}

\subsection{Comparison with ``Online-to-PAC'' Generalization Framework}
\label{sec:Compare-Lugosi}
We now discuss the comparison with \cite{lugosi2022generalization,lugosi2023online}, consider the case of the KL or mutual information. The only explicit {\it expected} generalization bound provided in \cite{lugosi2023online} is their Corollary 21, which recovers the square-root bound of $\mathcal{O}(\sqrt{I(W;S)/n})$. This bound is clearly weaker than our fast-rate bound in our Corollary~\ref{cor:solve-ld-mi-bound}-\ref{cor:tv-mi-linear}, due to the omission of vanishing terms in our oracle bound in Theorem~\ref{thm:ld-mi-linear}. In fact, a more refined MI bound is presented in the earlier version of \cite{lugosi2023online}, namely \cite[Corollary~4]{lugosi2022generalization}. This bound takes the form 
\[
\abs{\generr}\leq\sqrt{\frac{4\mathbb{E}_Z\|\ell(\cdot,Z)-\mathbb{E}_Z[\ell(\cdot,Z)]\|^2_\infty I(W;S)}{n}},
\]
which can indeed be derived from \cite{lugosi2023online} due to the generality of their framework. Recall that our Theorem~\ref{thm:ld-mi-linear} gives the bound 
\[
\abs{\generr}\leq\frac{1}{n}\sum_{i=1}^n \sqrt{(2\mathbb{E}[\Delta L^2_i] + 2|\mathbb{E}[G_i]|)I(\Delta L_i;U_i)}.
\] 
Notably, because we apply individual and loss-difference techniques, our averaged MI term is always tighter than that of \cite{lugosi2022generalization,lugosi2023online}, as $\frac{1}{n}\sum_{i=1}^n \sqrt{I(\Delta L_i;U_i)} \leq \sqrt{\frac{I(W;S)}{n}}$ generally holds. To fairly compare our framework with theirs, we ignore the difference between MI terms and only focus on the novel components of each bound, specifically $\mathbb{E}_Z\|\ell(\cdot,Z)-\mathbb{E}_Z[\ell(\cdot,Z)]\|^2_\infty$ in their work and $\mathbb{E}[\Delta L^2_i]+|\mathbb{E}[G_i]|$ in ours, let’s consider the following simple example:

\begin{exam}
    Let $\mathcal{W} \in [-1,1]$, and let the input space be $\mathcal{Z} = \{1, -1\}$. Assume $\mu = \text{Unif}(\mathcal{Z})$, i.e. $Z$ is a Rademacher variable. Consider a convex and $1$-Lipschitz loss function $\ell(w,z) = -w \cdot z$.
\end{exam} 

Under the ERM algorithm, $W = \mathcal{A}(S) = \frac{1}{n}\sum_{i=1}^n Z_i$. Notice that for any $w \in \mathcal{W}$, $\mathbb{E}_Z[\ell(w,Z)] = \frac{1}{2}(-w \cdot (1-1)) = 0$, hence $\mathbb{E}_Z\|\ell(\cdot,Z) - \mathbb{E}_Z[\ell(\cdot,Z)]\|^2_\infty = \mathbb{E}_Z\|\ell(\cdot,Z)\|^2_\infty = 1$. In contrast, since $\Delta L_i \in [-1,1]$ in this case, $\mathbb{E}[\Delta L^2_i] \leq \mathbb{E}[|\Delta L_i|]$ and $|\mathbb{E}[G_i]| \leq \mathbb{E}[|\Delta L_i|]$, we have $\mathbb{E}[\Delta L^2_i] + |\mathbb{E}[G_i]| \leq 2\mathbb{E}[|\Delta L_i|]$. Moreover, $\mathbb{E}[|\Delta L_i|] = \mathbb{E}[|W \cdot (Z_i^+ - Z_i^-)|] \leq \frac{2}{n}\mathbb{E}[|\sum_{i=1}^n Z_i|] \leq \frac{2}{\sqrt{n}}$, where the last step is by the Khintchine-Kahane inequality \cite[Theorem~D.9]{mohri2018foundations}.

Thus, in this example, $\mathbb{E}_Z\|\ell(\cdot,Z) - \mathbb{E}_Z[\ell(\cdot,Z)]\|^2_\infty = 1$, 
while our bound $\mathbb{E}[\Delta L^2_i] + |\mathbb{E}[G_i]| \leq 2\mathbb{E}[|\Delta L_i|] \leq \mathcal{O}(\frac{1}{\sqrt{n}})$ shows a tighter convergence rate. Consequently, even without using the individual trick and loss-difference technique, our bound (Theorem~\ref{thm:ld-mi-linear}) can still be tighter than \cite{lugosi2022generalization,lugosi2023online} in terms of convergence rate in this example. This suggests that, MI-based bounds in \cite{lugosi2022generalization, lugosi2023online} can still encounter limitations (e.g., slow convergence rate), as with previous MI-based bounds.

Finally, to clarify the comparison, we only demonstrate that our {\it oracle} CMI bound could be tighter. It is worth noting that the unpublished version of \cite{lugosi2023online} represents an initial exploration of their ``online-to-PAC'' generalization framework. As such, the results in their current version may primarily aim to recover previous findings, and their framework likely has room for further advancements.

\subsection{Proof of Theorem~\ref{thm:ld-hellinger-linear}}
\begin{proof}
    Lemma~\ref{lem:f-div-gen} indicates that 
    \begin{align*}
        I_{\rm H^2}^{\tilde{z}}(\Delta L_i;U_i)\geq \sup_{t\in(-\infty, 1)}\ex{\Delta L_i,U_i|\tilde{z}_i}{\frac{t(-1)^{{U}_i}\Delta L_i}{1+t(-1)^{{U}_i}\Delta L_i}\Big|\tilde{z}_i}.
    \end{align*}

    Let $a=\frac{\abs{\ex{}{G_i|\tilde{z}_i}}}{2\ex{}{G_i^2|\tilde{z}_i}}+1$. Consider the function $f(x)=\frac{x}{1+x}-x+ax^2$. Notice that $a\geq 1$ and  $-1\leq \frac{1}{a}-1\leq 1-\frac{1}{a}\leq 1$. By Eq.~(\ref{ineq:hellinger-use}) in Lemma~\ref{lem:inequalities}, we know that when $x\in[\frac{1}{a}-1,1-\frac{1}{a}]$, $f(x)\geq 0$. Hence, 
    recall that $G_i\in[-1,1]$, we restrict $t\in[\frac{1}{a}-1,1-\frac{1}{a}]$ to have the following inequality,
    \begin{align}
        \sup_{t<1}\ex{}{\frac{tG_i}{1+tG_i}\Big|\tilde{z}_i}\geq \sup_{t\in[\frac{1}{a}-1,1-\frac{1}{a}]}\ex{}{tG_i-at^2G_i^2|\tilde{z}_i}=\frac{\mathbb{E}^2[G_i|\tilde{z}_i]}{4a\ex{}{G_i^2|\tilde{z}_i}},
    \end{align}
    where the last equality holds when $t^*=\frac{\ex{}{G_i|\tilde{z}_i}}{2a\ex{}{G_i^2|\tilde{z}_i}}$.  

    We now show that $t^*$ can indeed be reached within $t\in[\frac{1}{a}-1,1-\frac{1}{a}]$.
    
    Substituting $a$, we can see that 
    \[
    t^*=\frac{\ex{}{G_i|\tilde{z}_i}}{\abs{\ex{}{G_i|\tilde{z}_i}}+2\ex{}{G_i^2|\tilde{z}_i}}.
    \]
    In addition, $1-\frac{1}{a}=\frac{\abs{\ex{}{G_i|\tilde{z}_i}}}{\abs{\ex{}{G_i|\tilde{z}_i}}+2\ex{}{G_i^2|\tilde{z}_i}}$ and $\frac{1}{a}-1=\frac{-\abs{\ex{}{G_i|\tilde{z}_i}}}{\abs{\ex{}{G_i|\tilde{z}_i}}+2\ex{}{G_i^2|\tilde{z}_i}}$. Clearly, either $\ex{}{G_i|\tilde{z}_i}=-\abs{\ex{}{G_i|\tilde{z}_i}}$ or $\ex{}{G_i|\tilde{z}_i}=\abs{\ex{}{G_i|\tilde{z}_i}}$ holds. Consequently, $t^*$ can be achieved in $[\frac{1}{a}-1,1-\frac{1}{a}]$.

    Therefore,
    \begin{align*}
         I_{\rm H^2}^{\tilde{z}}(\Delta L_i;U_i)\geq \sup_{t\in(-\infty, 1)}\ex{\Delta L_i,U_i|\tilde{z}_i}{\frac{t(-1)^{{U}_i}\Delta L_i}{1+t(-1)^{{U}_i}\Delta L_i}\Big|\tilde{z}_i}\geq \frac{\mathbb{E}^2[G_i|\tilde{z}_i]}{4a\ex{}{G_i^2|\tilde{z}_i}},
    \end{align*}
    which is equivalent to
    \begin{align}
        \abs{\mathbb{E}[G_i|\tilde{z}_i]}\leq \sqrt{4a\ex{}{G_i^2|\tilde{z}_i}I_{\rm H^2}^{\tilde{z}}(\Delta L_i;U_i)}=&\sqrt{\pr{2\abs{\ex{}{G_i|\tilde{z}_i}}+4\ex{}{G_i^2|\tilde{z}_i}}I_{\rm H^2}^{\tilde{z}}(\Delta L_i;U_i)}.
        \label{ineq:key-comp-hellinger}
    \end{align}
    Then, by Jensen's inequality,
    \begin{align*}
        \generr\leq\frac{1}{n}\sum_{i=1}^n\abs{\ex{\Delta L_i,U_i,\widetilde{Z}_i}{G_i}}\leq \frac{1}{n}\sum_{i=1}^n\mathbb{E}_{\widetilde{Z}_i}{\abs{\ex{\Delta L_i,U_i|\widetilde{Z}_i}{G_i}}}.
    \end{align*}
    Plugging Eq.~(\ref{ineq:key-comp-hellinger}) into the above will complete the proof.
\end{proof}

\subsection{Proof of Theorem~\ref{thm:ld-js-linear}}
\begin{proof}
    Lemma~\ref{lem:f-div-gen} indicates that 
    \begin{align*}
        I_{\rm JS}^{\tilde{z}}(\Delta L_i;U_i)\geq\sup_{t>-\log{2}}\ex{\Delta L_i,U_i|\tilde{z}_i}{\log(2-e^{-t(-1)^{{U}_i}\Delta L_i})|\tilde{z}_i}.
    \end{align*}

    Let $a=\frac{\abs{\ex{}{G_i|\tilde{z}_i}}}{\ex{}{G_i^2|\tilde{z}_i}}+4$. Consider the function $f(x)=\log(2-e^{-x})-x+ax^2$. Notice that $a\geq 4$, and by Eq.~(\ref{ineq:jsd-use}) in Lemma~\ref{lem:inequalities}, we know that when $x\in[-\frac{1}{2},\frac{1}{2}]$, $f(x)\geq 0$. Hence, 
    recall that $G_i\in[-1,1]$, we restrict $t\in[-\frac{1}{2},\frac{1}{2}]$ to have the following inequality,
    \begin{align}
        \sup_{t>-\log{2}}\ex{}{\log(2-e^{-tG_i})|\tilde{z}_i}\geq \sup_{t\in[-\frac{1}{2},\frac{1}{2}]}\ex{}{tG_i-at^2G_i^2|\tilde{z}_i}=\frac{\mathbb{E}^2[G_i|\tilde{z}_i]}{4a\ex{}{G_i^2|\tilde{z}_i}},
    \end{align}
    as the same with the previous proofs, the last equality holds when $t^*=\frac{\ex{}{G_i|\tilde{z}_i}}{2a\ex{}{G_i^2|\tilde{z}_i}}$. 

    We then check the conditions that $t^*\in[-\frac{1}{2},\frac{1}{2}]$:
    \[
    -\frac{1}{2}\leq \frac{\ex{}{G_i|\tilde{z}_i}}{2a\ex{}{G_i^2|\tilde{z}_i}}\leq \frac{1}{2} \qquad \Longleftrightarrow \qquad a\geq \frac{\abs{\ex{}{G_i|\tilde{z}_i}}}{\ex{}{G_i^2|\tilde{z}_i}},
    \]
    which is clearly satisfied by $a=\frac{\abs{\ex{}{G_i|\tilde{z}_i}}}{\ex{}{G_i^2|\tilde{z}_i}}+4$.
    Consequently, $t^*$ can be achieved in $[-\frac{1}{2},\frac{1}{2}]$.

    Therefore,
    \begin{align*}
         I_{\rm JS}^{\tilde{z}}(\Delta L_i;U_i)\geq\sup_{t>-\log{2}}\ex{\Delta L_i,U_i|\tilde{z}_i}{\log(2-e^{-t(-1)^{{U}_i}\Delta L_i})|\tilde{z}_i} \geq \frac{\mathbb{E}^2[G_i|\tilde{z}_i]}{4a\ex{}{G_i^2|\tilde{z}_i}}.
    \end{align*}
    With additional arrangements, we have
    \begin{align}
        \mathbb{E}[G_i|\tilde{z}_i]\leq \sqrt{4a\ex{}{G_i^2|\tilde{z}_i}I_{\rm JS}^{\tilde{z}}(\Delta L_i;U_i)}=&\sqrt{\pr{4\abs{\ex{}{G_i|\tilde{z}_i}}+16\ex{}{G_i^2|\tilde{z}_i}}I_{\rm JS}^{\tilde{z}}(\Delta L_i;U_i)}.
        \label{ineq:key-comp-js}
    \end{align}
    Then, by Jensen's inequality,
    \begin{align*}
        \generr\leq\frac{1}{n}\sum_{i=1}^n\abs{\ex{\Delta L_i,U_i,\widetilde{Z}_i}{G_i}}\leq \frac{1}{n}\sum_{i=1}^n\mathbb{E}_{\widetilde{Z}_i}{\abs{\ex{\Delta L_i,U_i|\widetilde{Z}_i}{G_i}}}.
    \end{align*}
    Plugging Eq.~(\ref{ineq:key-comp-js}) into the above will complete the proof.
\end{proof}

\subsection{Additional \texorpdfstring{$f$}---Information Disintegrated Bounds}
\label{sec:disint-mi-chi}
The following bounds can be derived by first employing a similar approach as in the proof of Theorem~\ref{thm:ld-mi-linear} to upper bound $\ex{\Delta L_i,U_i|\widetilde{Z}_i}{G_i}$. Then, we incorporate this individual bound into $\generr\leq\frac{1}{n}\sum_{i=1}^n\abs{\ex{\Delta L_i,U_i,\widetilde{Z}_i}{G_i}}\leq \frac{1}{n}\sum_{i=1}^n\mathbb{E}_{\widetilde{Z}_i}{\abs{\ex{\Delta L_i,U_i|\widetilde{Z}_i}{G_i}}}$.
\begin{thm}
\label{thm:ld-distmi-linear}
    Under the same conditions in Theorem~\ref{thm:ld-mi-linear}, we have
    \[
    \abs{\generr}\leq\frac{1}{n}\sum_{i=1}^n\mathbb{E}_{\widetilde{Z}_i}\sqrt{2\pr{\ex{}{\Delta L^2_i|\widetilde{Z}_i}+\abs{\ex{}{G_i|\widetilde{Z}_i}}}I^{\widetilde{Z}_i}(\Delta L_i;U_i)}.
    \]
\end{thm}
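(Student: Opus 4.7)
The plan is to mirror the proof of Theorem~\ref{thm:ld-mi-linear} verbatim, but to perform every step under conditioning on the $i$th row $\widetilde{Z}_i$ of the supersample, and then only take the outer expectation over $\widetilde{Z}_i$ at the end. The payoff of doing this is exactly the disintegrated quantity $I^{\widetilde{Z}_i}(\Delta L_i;U_i)$ in place of the marginal $I(\Delta L_i;U_i)$, combined (via Jensen) with the conditional second moment $\ex{}{\Delta L_i^2\mid \widetilde{Z}_i}$ and the conditional bias $|\ex{}{G_i\mid \widetilde{Z}_i}|$.

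First, I would apply Lemma~\ref{lem:f-div-gen} conditionally on $\widetilde{Z}_i=\tilde z_i$. The conditions of that lemma go through unchanged because, given $\widetilde{Z}_i$, $U_i$ is still uniform on $\{0,1\}$ and independent of $\Delta L_i$ under the product of marginals, so with $X=\Delta L_i$, $Y=U_i$ and $f(\Delta L_i,U_i')=(-1)^{U_i'}\Delta L_i$ we still have $\ex{X,Y'\mid \tilde z_i}{f(X,Y')}=0$. This yields
\[
I^{\tilde z_i}(\Delta L_i;U_i)\;\geq\;\sup_{t>-1}\ex{\Delta L_i,U_i\mid \tilde z_i}{\log\bigl(1+tG_i\bigr)}.
\]

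Next, I would set the tuning parameter $a$ to be a function of $\tilde z_i$, namely
\[
a(\tilde z_i)\;=\;\frac{|\ex{}{G_i\mid \tilde z_i}|}{2\,\ex{}{G_i^2\mid \tilde z_i}}+\frac{1}{2},
\]
and invoke Eq.~(\ref{ineq:kl-use}) from Lemma~\ref{lem:inequalities} to lower-bound $\log(1+tG_i)$ by $tG_i-a(\tilde z_i)\,t^2G_i^2$ on the interval $t\in[\tfrac{1}{2a(\tilde z_i)}-1,1-\tfrac{1}{2a(\tilde z_i)}]$; this is legitimate since $|tG_i|\leq 1-\tfrac{1}{2a(\tilde z_i)}$ on that interval. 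Maximizing the quadratic in $t$ gives the optimizer $t^*(\tilde z_i)=\ex{}{G_i\mid \tilde z_i}/(2a(\tilde z_i)\ex{}{G_i^2\mid \tilde z_i})$, which, by exactly the same endpoint check as in the proof of Theorem~\ref{thm:ld-mi-linear}, lies inside the admissible interval. This delivers the conditional analogue of Eq.~(\ref{ineq:key-comp-kl}):
\[
\ex{}{G_i\mid \tilde z_i}^2\;\leq\;2\bigl(|\ex{}{G_i\mid \tilde z_i}|+\ex{}{\Delta L_i^2\mid \tilde z_i}\bigr)\,I^{\tilde z_i}(\Delta L_i;U_i).
\]

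Finally, I would take square roots and stitch everything together. Using $|\generr|\leq \tfrac{1}{n}\sum_i |\ex{}{G_i}|\leq \tfrac{1}{n}\sum_i \ex{\widetilde{Z}_i}{|\ex{}{G_i\mid \widetilde{Z}_i}|}$ (Jensen) and then the square root of the bound above inside that expectation gives exactly the stated inequality. The main obstacle, as in Theorem~\ref{thm:ld-mi-linear}, is just the endpoint check that the optimal $t^*$ is achievable, but since the argument is purely algebraic in $(|\ex{}{G_i\mid \tilde z_i}|,\ex{}{G_i^2\mid \tilde z_i})$ it carries over without modification; no new ingredients beyond disintegrating the original proof are needed.
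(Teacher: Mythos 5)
Your proposal is correct and is exactly the argument the paper intends: it explicitly states that Theorem~\ref{thm:ld-distmi-linear} follows by running the proof of Theorem~\ref{thm:ld-mi-linear} conditionally on $\widetilde{Z}_i$ to bound $\ex{\Delta L_i,U_i|\widetilde{Z}_i}{G_i}$, and then inserting that bound into $\abs{\generr}\leq \frac{1}{n}\sum_{i=1}^n \mathbb{E}_{\widetilde{Z}_i}\abs{\ex{\Delta L_i,U_i|\widetilde{Z}_i}{G_i}}$. Your disintegration of the parameter $a$, the endpoint check for $t^*$, and the final application of Jensen all match the paper's route (and the identity $G_i^2=\Delta L_i^2$ converts your conditional second moment into the stated $\ex{}{\Delta L_i^2|\widetilde{Z}_i}$), so no gaps remain.
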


\begin{thm}
\label{thm:ld-chi-linear}
    Under the same conditions in Theorem~\ref{thm:ld-mi-linear}, we have
    \[
    \abs{\generr}\leq\frac{1}{n}\sum_{i=1}^n\mathbb{E}_{\widetilde{Z}_i}\sqrt{2\pr{\ex{}{\Delta L^2_i|\widetilde{Z}_i}+\abs{\ex{}{G_i|\widetilde{Z}_i}}}I^{\widetilde{Z}_i}_{\rm \chi^2}(\Delta L_i;U_i)},
    \]
    where $I^{\tilde{z}_i}_{\rm \chi^2}(\Delta L_i;U_i)=\mathrm{D}_{\chi^2}\pr{P_{\Delta L_i,U_i|\tilde{z}_i}||P_{\Delta L_i|\tilde{z}_i}P_{U_i}}$ is the (disintegrated) $\chi^2$-information.
\end{thm}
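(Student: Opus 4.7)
The proof plan follows the exact same template as Theorem~\ref{thm:ld-mi-linear}, only with the convex function $\phi(x)=(x-1)^2$ in place of $\phi(x)=x\log x + x - 1$, and carried out conditionally on $\widetilde{Z}_i$ so that we land on the disintegrated $\chi^2$-information. The starting point is to apply Lemma~\ref{lem:f-div-gen} with $X=\Delta L_i$, $Y=U_i$, $f(\Delta L_i,U_i')=(-1)^{U_i'}\Delta L_i$, everything conditioned on $\widetilde{Z}_i=\tilde z_i$. The condition $\ex{\Delta L_i,U_i'|\tilde z_i}{(-1)^{U_i'}\Delta L_i}=0$ holds by independence of $U_i'$ and $\Delta L_i$ together with $\mathbb{E}[(-1)^{U_i'}]=0$. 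For $\phi(x)=(x-1)^2$ one computes $\phi^{*}(y)=\frac{y^{2}}{4}+y$ with inverse $\phi^{*-1}(z)=2(\sqrt{1+z}-1)$ on $z\in(-1,\infty)$, so Lemma~\ref{lem:f-div-gen} yields
\[
\sup_{t\in(-1,1)}\ex{\Delta L_i,U_i|\tilde z_i}{2\bigl(\sqrt{1+tG_i}-1\bigr)}\;\leq\;I^{\tilde z_i}_{\chi^2}(\Delta L_i;U_i),
\]
where as in the proof of Theorem~\ref{thm:ld-mi-linear} we restrict $t$ so that $1+tG_i\ge 0$ for every realization of $G_i\in[-1,1]$.

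Next I would invoke the elementary inequality $2(\sqrt{1+z}-1)\ge \log(1+z)$ for all $z>-1$ (this is the pointwise version of $\kl{P}{Q}\le\chi^2(P\|Q)$; see Figure~\ref{fig:both}(Left) in the paper), which allows us to inherit the KL-style lower bound verbatim. Concretely, set $a=\frac{|\ex{}{G_i|\tilde z_i}|}{2\ex{}{G_i^2|\tilde z_i}}+\tfrac12$ and restrict $t\in[\frac{1}{2a}-1,1-\frac{1}{2a}]$; by Eq.~(\ref{ineq:kl-use}) in Lemma~\ref{lem:inequalities} we have $\log(1+x)\ge x-ax^2$ on this range, so
\[
\ex{}{2(\sqrt{1+tG_i}-1)|\tilde z_i}\;\ge\;\ex{}{\log(1+tG_i)|\tilde z_i}\;\ge\;t\,\ex{}{G_i|\tilde z_i}-at^2\,\ex{}{G_i^2|\tilde z_i}.
\]
Optimizing over $t$ exactly as in the proof of Theorem~\ref{thm:ld-mi-linear} — the maximizer $t^{*}=\ex{}{G_i|\tilde z_i}/(\abs{\ex{}{G_i|\tilde z_i}}+\ex{}{G_i^2|\tilde z_i})$ lies in the admissible interval — gives the conditional quadratic inequality
\[
\mathbb{E}^{2}[G_i|\tilde z_i]\;\le\;2\bigl(\abs{\ex{}{G_i|\tilde z_i}}+\ex{}{\Delta L_i^{2}|\tilde z_i}\bigr)\,I^{\tilde z_i}_{\chi^2}(\Delta L_i;U_i),
\]
using $\ex{}{G_i^{2}|\tilde z_i}=\ex{}{\Delta L_i^{2}|\tilde z_i}$.

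Finally, I would take square roots, apply Jensen's inequality twice — first to pull the absolute value inside the conditional expectation, $|\ex{\Delta L_i,U_i,\widetilde Z_i}{G_i}|\le \mathbb{E}_{\widetilde Z_i}\,|\ex{\Delta L_i,U_i|\widetilde Z_i}{G_i}|$, and then to sum over $i$ and divide by $n$, using that $\generr=\frac1n\sum_i \ex{}{G_i}$. This produces exactly the claimed bound. I do not expect any genuine obstacle: the only subtle step is verifying that the optimizing $t^{*}$ sits inside the admissible interval on which the lower bound $\log(1+x)\ge x-ax^2$ holds, and that verification is identical to the one already carried out in the proof of Theorem~\ref{thm:ld-mi-linear}. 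Equivalently, since $I^{\tilde z_i}(\Delta L_i;U_i)\le I^{\tilde z_i}_{\chi^2}(\Delta L_i;U_i)$ pointwise in $\tilde z_i$, the theorem follows as a one-line corollary of Theorem~\ref{thm:ld-distmi-linear}, but the variational derivation above is the intended template since it extends uniformly to the other $f$-information bounds (Theorems~\ref{thm:ld-hellinger-linear}–\ref{thm:ld-js-linear}).
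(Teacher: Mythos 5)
Your proposal is correct and follows essentially the same route as the paper: the paper likewise derives Theorem~\ref{thm:ld-chi-linear} by running the Theorem~\ref{thm:ld-mi-linear} argument conditionally on $\widetilde{Z}_i$ and using the pointwise inequality $2(\sqrt{1+z}-1)\geq\log(1+z)$ (equivalently $\kl{P}{Q}\leq\chi^2(P||Q)$) to inherit the KL-case lower bound, then averaging over $\widetilde{Z}_i$ via Jensen. The only detail worth noting is that your shortcut via Theorem~\ref{thm:ld-distmi-linear} is also the paper's stated alternative, so both of your suggested paths match the intended derivation.
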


The following bound is a corollary from Theorem~\ref{thm:ld-hellinger-linear}, which is used in our experiments in Section~\ref{sec:experiment}.

\begin{cor}
\label{cor:ld-hellinger-tv}
    Under the same conditions in Theorem~\ref{thm:ld-mi-linear}, we have
    \[
    \abs{\generr}\leq\frac{1}{n}\sum_{i=1}^n\mathbb{E}_{\widetilde{Z}_i}\sqrt{\pr{4\ex{}{\Delta L^2_i|\widetilde{Z}_i}+2\ex{U_i}{\tv{P_{\Delta L_i|U_i,\widetilde{Z}_i}}{P_{\Delta L_i|\widetilde{Z}_i}}}}I^{\widetilde{Z}_i}_{\rm H^2}(\Delta L_i;U_i)}.
    \]
\end{cor}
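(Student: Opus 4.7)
The plan is to derive Corollary~\ref{cor:ld-hellinger-tv} as an almost immediate consequence of Theorem~\ref{thm:ld-hellinger-linear}, by monotonically replacing the $|\mathbb{E}[G_i|\widetilde{Z}_i]|$ term inside the square root with a conditional total-variation upper bound. Since the square root is monotone, a pointwise-in-$\widetilde{Z}_i$ upper bound of $|\mathbb{E}[G_i|\widetilde{Z}_i]|$ by $\mathbb{E}_{U_i}[\mathrm{D_{TV}}(P_{\Delta L_i|U_i,\widetilde{Z}_i}, P_{\Delta L_i|\widetilde{Z}_i})]$ can be substituted directly into Theorem~\ref{thm:ld-hellinger-linear} without disturbing the other factors.

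The key inequality I would establish is
\[
\bigl|\ex{}{G_i\mid\widetilde{Z}_i}\bigr| \;\leq\; \ex{U_i}{\mathrm{D_{TV}}\!\pr{P_{\Delta L_i\mid U_i,\widetilde{Z}_i},\, P_{\Delta L_i\mid\widetilde{Z}_i}}}.
\]
The proof mirrors the decoupling argument used for Corollary~\ref{cor:tv-mi-linear}, but performed conditionally on $\widetilde{Z}_i$. Specifically, I would introduce an independent copy $\Delta L'_i$ of $\Delta L_i$ drawn from the marginal $P_{\Delta L_i\mid\widetilde{Z}_i}$, independent of $U_i$, and set $G'_i=(-1)^{U_i}\Delta L'_i$. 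Because $U_i$ is a Rademacher variable independent of $(\widetilde{Z}_i,\Delta L'_i)$, we have $\mathbb{E}[G'_i\mid\widetilde{Z}_i]=0$, so that
\[
\bigl|\ex{}{G_i\mid\widetilde{Z}_i}\bigr| = \bigl|\ex{}{G_i - G'_i\mid\widetilde{Z}_i}\bigr| \leq \ex{U_i}{\Bigl|\ex{}{\Delta L_i\mid U_i,\widetilde{Z}_i}-\ex{}{\Delta L_i\mid\widetilde{Z}_i}\Bigr|},
\]
where I pushed the absolute value inside via Jensen's inequality and used that $|(-1)^{U_i}|=1$ together with the fact that the marginal expectation of $\Delta L'_i$ coincides with that of $\Delta L_i$ given $\widetilde{Z}_i$. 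The bounded loss difference assumption gives $|\Delta L_i|\leq 1$, so the dual representation of total variation (Lemma~\ref{lem:tv-dual}) with constant $M=1$ converts the right-hand side into the required TV expression.

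Plugging this pointwise inequality into the $2\abs{\ex{}{G_i\mid\widetilde{Z}_i}}$ term inside the square root of Theorem~\ref{thm:ld-hellinger-linear} and keeping the outer $\mathbb{E}_{\widetilde{Z}_i}$ yields exactly the statement. The only subtlety worth flagging, though not a real obstacle, is the conditioning structure: $\Delta L_i$ is \emph{not} independent of $U_i$ given $\widetilde{Z}_i$ because the hypothesis $W$ depends on the training sample through $U$, which is precisely why the deviation of $P_{\Delta L_i\mid U_i,\widetilde{Z}_i}$ from its marginal $P_{\Delta L_i\mid\widetilde{Z}_i}$ is the natural quantity controlling $|\mathbb{E}[G_i\mid\widetilde{Z}_i]|$. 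Everything else is a routine chain through Jensen's inequality and the monotonicity of $\sqrt{\cdot}$.
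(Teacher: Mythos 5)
Your proposal is correct and follows essentially the same route as the paper: the paper derives this corollary from Theorem~\ref{thm:ld-hellinger-linear} by bounding $\abs{\ex{}{G_i\mid\widetilde{Z}_i}}$ exactly as in the proof of Corollary~\ref{cor:tv-mi-linear} (decoupling with an independent copy $\Delta L'_i$, Jensen, and the dual form of total variation from Lemma~\ref{lem:tv-dual}), only performed conditionally on $\widetilde{Z}_i$ and without splitting the square root. Your conditional version of the decoupling step and the substitution inside the square root match the intended argument.
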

The proof is nearly the same to Corollary~\ref{cor:tv-mi-linear} except that we do not apply $\sqrt{x+y}\leq\sqrt{x}+\sqrt{y}$ to unpack the terms.

\section{Omitted Proof in Section~\ref{sec:unbounded-ld}}
\subsection{Proof of Lemma~\ref{lem:f-div-gen-unbounded}}
\begin{proof}
    By the variational representation of $f$-divergence, we have
    \begin{align*}
        I_\phi(X;\varepsilon)\geq& \sup_{g}\ex{X,\varepsilon}{g(X,\varepsilon)}-\ex{X,\varepsilon'}{\phi^*(g(X,\varepsilon'))}\\
        \geq & \sup_{t}\ex{X,\varepsilon}{\phi^{*-1}(t\varepsilon X)\cdot\mathbbm{1}_{\abs{X}\leq C}}-\ex{X,\varepsilon'}{\phi^{*}\pr{\phi^{*-1}(t\varepsilon' X)\cdot\mathbbm{1}_{\abs{X}\leq C}}}.
    \end{align*}

    For the second term above, we have
    \begin{align*}
        &\ex{X,\varepsilon'}{\phi^{*}\pr{\phi^{*-1}(t\varepsilon' X)\cdot\mathbbm{1}_{\abs{X}\leq C}}}
        =\int_{\Xi} t\varepsilon' X dP_{X,\varepsilon'}+\phi^*(0)=\int_{\abs{x}\leq C} \frac{t X-t X}{2} dP_{X}=0,
    \end{align*}
    where $\Xi=\{x\in\mathcal{X},\varepsilon\in\{-1,1\}| \abs{x}\leq C \}$. 
    Putting everything together will conclude the proof.
\end{proof}

\subsection{Proof of Theorem~\ref{thm:ld-mi-unbounded}}
\begin{proof}
    Since $(-1)^{U_i}$ is a Rademacher variable and $\phi^*(0)=e^0-1=0$ for KL divergence, we can apply Lemma~\ref{lem:f-div-gen-unbounded} first then follow the similar developments in Theorem~\ref{thm:ld-mi-linear} by controlling $tG_i\in[-1,1]$. In particular, for any $C\geq 0$,
    \begin{align*}
        I(\Delta L_i;U_i)\geq& \sup_{t\in[-1/C,1/C]}\ex{\Delta L_i,U_i}{\log(1+tG_i)\cdot\mathbbm{1}_{\abs{G_i}\leq C}}\\
        \geq& \sup_{t\in[\frac{1-2a}{2aC},\frac{2a-1}{2aC}]}\ex{\Delta L_i,U_i}{(tG_i-at^2G_i^2)\cdot\mathbbm{1}_{\abs{G_i}\leq C}}\\
        =&\frac{\mathbb{E}^2[G_i\mathbbm{1}_{\abs{G_i}\leq C}]}{4a\ex{}{G_i^2\mathbbm{1}_{\abs{G_i}\leq C}}},
    \end{align*}
    where $a=\frac{C\abs{\ex{}{G_i\mathbbm{1}_{\abs{G_i}\leq C}}}}{2\ex{}{G_i^2\mathbbm{1}_{\abs{G_i}\leq C}}}+\frac{1}{2}$. Notice that $t^*=\frac{\ex{}{G_i\mathbbm{1}_{\abs{G_i}\leq C}}}{2a\ex{}{G_i^2\mathbbm{1}_{\abs{G_i}\leq C}}}=\frac{\ex{}{G_i\mathbbm{1}_{\abs{G_i}\leq C}}}{C\ex{}{G_i\mathbbm{1}_{\abs{G_i}\leq C}}+\ex{}{G_i^2\mathbbm{1}_{\abs{G_i}\leq C}}}$ and it is easy to verify $t^*\in[\frac{1-2a}{2aC},\frac{2a-1}{2aC}]$, so the last equality can be achieved.

    Consequently,
    \begin{align}
        \label{ineq:first-half-unbounded}
        \abs{\ex{}{G_i\mathbbm{1}_{\abs{G_i}\leq C}}}\leq\sqrt{2(C\abs{\ex{}{G_i\mathbbm{1}_{\abs{G_i}\leq C}}}+\ex{}{G_i^2\mathbbm{1}_{\abs{G_i}\leq C}})I(\Delta L_i;U_i)}.
    \end{align}

    Furthermore, let $P=P_{\Delta L_i,U_i}$ and  $Q=P_{\Delta L_i}P_{U'_i}$
    \begin{align*}
        \abs{\ex{}{G_i\mathbbm{1}_{\abs{G_i}> C}}}=& \abs{\int G_i\mathbbm{1}_{\abs{G_i}> C} \frac{dP}{dQ} dQ}\\
        =&\abs{\int G_i\mathbbm{1}_{\abs{G_i}> C} \pr{\frac{dP}{dQ}-1} dQ},
    \end{align*}
    where we use the fact that $\ex{\Delta L_i, U'_i}{G_i\mathbbm{1}_{\abs{\Delta L_i}> C}}=\frac{\ex{\Delta L_i}{\Delta L_i\mathbbm{1}_{\abs{\Delta L_i}> C}-\Delta L_i\mathbbm{1}_{\abs{\Delta L_i}> C}}}{2}=0$.

    Then, by using H{\"o}lder's inequality twice,
    \begin{align}
        \abs{\ex{}{G_i\mathbbm{1}_{\abs{G_i}> C}}}\leq&\pr{\int \abs{G_i}^\beta\mathbbm{1}_{\abs{G_i}> C}dQ}^{\frac{1}{\beta}}\pr{\int\pr{\frac{dP}{dQ}-1}^\alpha dQ}^{\frac{1}{\alpha}}\notag\\
        \leq&\pr{\int \mathbbm{1}_{\abs{\Delta L_i}> C}dQ}^{\frac{q-1}{q\beta}}\pr{\int \abs{\Delta L_i}^{q\beta}dQ}^{\frac{1}{q\beta}}\pr{I_{\phi_\alpha}(\Delta L_i;U_i)}^{1/{\alpha}}\notag\\
        =&\pr{P\pr{\abs{\Delta L_i}> C}}^{\frac{q-1}{q\beta}}||\Delta L_i||_{q\beta}\pr{I_{\phi_\alpha}(\Delta L_i;U_i)}^{1/{\alpha}}.
        \label{ineq:second-half-unbounded}
    \end{align}

    Moreover, notice that
    \begin{align*}
        \abs{\generr}\leq\frac{1}{n}\sum_{i=1}^n\abs{\ex{}{G_i}}=&\frac{1}{n}\sum_{i=1}^n\abs{\ex{}{G_i\mathbbm{1}_{\abs{G_i}> C}+G_i\mathbbm{1}_{\abs{G_i}\leq C}}}\\
        \leq&\frac{1}{n}\sum_{i=1}^n\pr{\abs{\ex{}{G_i\mathbbm{1}_{\abs{G_i}> C}}}+\abs{\ex{}{G_i\mathbbm{1}_{\abs{G_i}\leq C}}}}.
    \end{align*}

    Plugging Eq.~(\ref{ineq:first-half-unbounded}) and Eq.~(\ref{ineq:second-half-unbounded}) into the above will complete the proof.
\end{proof}

\subsection{Proof of Corollary~\ref{cor:mi-unbounded-mkv}}
\begin{proof}
    First, by Markov's inequality, we have
    \[
    \zeta_2\leq\pr{\frac{\mathbb{E}{\abs{\Delta L_i}}}{C}}^{\gamma}||\Delta L_i||_{q\beta}=\frac{||\Delta L_i||^{\gamma}_1||\Delta L_i||_{q\beta}}{C^{\gamma}}.
    \]
    In addition, recall that $\zeta_1\leq \sqrt{2}C$, we let
    \begin{align*}
        A_1 = \sqrt{2I(\Delta L_i;U_i)} \quad \text{ and } A_2 =||\Delta L_i||^{\gamma}_1||\Delta L_i||_{q\beta} \sqrt[\uproot{8} \alpha]{I_{\phi_\alpha}(\Delta L_i;U_i)}.
    \end{align*}
    Then, define $f(C)=A_1C+\frac{A_2}{C^\gamma}$. Solving $\min_{C>0} f(C)$, we have
    \[
    C^*= \pr{\frac{A_2\gamma}{A_1}}^{\frac{1}{\gamma+1}}.
    \]

    Plugging $C^*$ into $f(C)$, we have
    \[
    f(C^*)=\pr{\gamma^{\frac{1}{\gamma+1}}+\gamma^{\frac{-\gamma}{\gamma+1}}}A_1^{\frac{\gamma}{\gamma+1}}A_2^{\frac{1}{\gamma+1}}.
    \]
    From the proof of Theorem~\ref{thm:ld-mi-unbounded}, we notice that
    \[
    \abs{\ex{}{G_i\mathbbm{1}_{\abs{G_i}> C}}}+\abs{\ex{}{G_i\mathbbm{1}_{\abs{G_i}\leq C}}}\leq f(C^*).
    \]
    Substituting $A_1$ and $A_2$ into the expression of $f(C^*)$, we will finally obtain
    \[
    \abs{\generr}\leq\frac{1}{n}\sum_{i=1}^n\pr{\gamma^{\frac{1}{\gamma+1}}\!+\!\gamma^{\frac{-\gamma}{\gamma+1}}}\pr{\sqrt{2I(\Delta L_i;U_i)}}^{\frac{\gamma}{\gamma+1}}\pr{||\Delta L_i||^\gamma_{1}||\Delta L_i||_{q\beta}\sqrt[\uproot{8} \alpha]{I_{\phi_\alpha}(\Delta L_i;U_i)}}^{\frac{1}{\gamma+1}}.
    \]
    This completes the proof.
\end{proof}

\section{Experimental Details}
\label{sec:experiments-appendix}
Our experimental setup largely follows \cite{wang2023tighter}, and the code of our experiments can be found at \href{https://github.com/ZiqiaoWangGeothe/Conditional-f-Information-Bound}{https://github.com/ZiqiaoWangGeothe/Conditional-f-Information-Bound}.
\paragraph{Linear Classifier on Synthetic Gaussian Dataset} In this experiment, similar to \cite{wang2023tighter}, we use the popular Python package \emph{scikit-learn} \cite{scikit-learn} to generate synthetic Gaussian data. Each feature of data is drawn independently from a Gaussian distribution, ensuring that all features are informative for the class labels. Specifically, we set the dimension of feature vector to $5$ and create different classes of points, which are normally distributed. We train the linear classifier using full-batch gradient descent with a fixed learning rate of $0.01$ for a total of $300$ epochs, employing early stopping when the training error falls below a threshold (e.g., $<0.5\%$). To ensure robustness and statistical significance, we generate $50$ different supersamples for each experiment. Within each supersample, we create $100$ different mask random variables, resulting in a total of $5,000$ runs for each experimental setting. This thorough setup allows us to compare 
and the disintegrated $f$-information-based bounds.


\paragraph{CNN and ResNet-50 on Real-World Datasets} 
In these experiments, the same setup is initially given in \cite{harutyunyan2021informationtheoretic}, and is also used in \cite{hellstrom2022a,wang2023tighter,dong2024rethinking}. Specifically, we draw $k_1$ samples of $\widetilde{Z}$ and $k_2$ samples of $U$ for each given $\tilde{z}$.
For the CNN on the binary MNIST dataset, we set $k_1=5$ and $k_2=30$. The 4-layer CNN model is trained using the Adam optimizer with a learning rate of $0.001$ and a momentum coefficient of $\beta_1=0.9$. The training process spans $200$ epochs with a batch size of $128$.
For ResNet-50 on CIFAR10, we set $k_1=2$ and $k_2=40$. The ResNet model is trained using stochastic gradient descent (SGD) with a learning rate of $0.01$ and a momentum coefficient of $0.9$ for a total of $40$ epochs. The batch size for this experiment is set to $64$.
In the SGLD experiment, we train a 4-layer CNN on the binary MNIST dataset with a batch size of $100$ for $40$ epochs. The initial learning rate is $0.01$ and decays by a factor of $0.9$ after every $100$ iterations. Let $t$ be the iteration index; the inverse temperature of SGLD is given by $\min\{4000, \max\{100, 10e^{t/100}\}\}$. We set the training sample size to $n=4000$, and $k_1=5$ and $k_2=30$. Checkpoints are saved every $4$ epochs.

All these experiments are conducted using NVIDIA A100 GPUs with 40 GB of memory. For more comprehensive details, including model architectures, we recommend referring to \cite{harutyunyan2021informationtheoretic,hellstrom2022a,wang2023tighter}.

\section{CMI Framework from the Perspective of Coin-Betting Sequences}
\label{sec:cmi-coin-betting}
Deriving generalization bounds from a coin-betting perspective is inspired by \cite{orabona2023tight,jang2023tighter,kuzborskij2024better}. We now adapt this strategy to derive CMI bounds, invoking inequalities similar to those in Lemma~\ref{lem:inequalities}.

Assume $\ell\in[0,1]$. Let $\Delta\ell_i \in [-1, 1]$ be a sequence of ``continuous coin'' outcomes chosen arbitrarily, and at each round $i$, a player bets $c_i\geq 0$ money on the sign of the outcome $\varepsilon_i\in\{-1,1\}$. Then, $\Delta\ell_i$ is revealed, and the bettor wins or loses $c_i\varepsilon_i\Delta\ell_i$ money.

Let the initial wealth $\mathrm{Wealth}_0=1$. The wealth at the end of round $n$ is:
\[
\mathrm{Wealth}_n=\mathrm{Wealth}_{n-1}+c_n\varepsilon_n\Delta\ell_n=1+\sum_{i=1}^n c_i\varepsilon_i\Delta\ell_i.
\]

Notice that $c_i\in[0,\mathrm{Wealth}_{i-1}]$. Now consider a strategy where the player always bets a fixed fraction $t\in[0,1]$ of the current money, i.e., $c_i=t\mathrm{Wealth}_{i-1}$. Then:
\[
\mathrm{Wealth}_n(t)=\mathrm{Wealth}_{n-1}+t\varepsilon_n\Delta\ell_n\mathrm{Wealth}_{n-1}=\prod_{i=1}^n\pr{1+ t\varepsilon_i\Delta\ell_i}.
\]

It follows that 
\[
\log\mathrm{Wealth}_n(t)=\sum_{i=1}^n\log(1+ t\varepsilon_i\Delta\ell_i).
\]

Unlike the traditional online gambling game, we incorporate the individual technique from \cite{bu2019tightening} by analyzing the expected log-wealth of the $i$th bet\footnote{This can also be interpreted as having $n$ players each place a single bet with $t$, so $\log\mathrm{Wealth}_n(t)$ represents their collective log-wealth.}, given by
\[
W_i(t)=\mathbb{E}_{\varepsilon_i,\Delta L_i}{\log\pr{1+ t\varepsilon_i\Delta L_i}}.
\]

There are two kinds of players: those guessing the coin outcome randomly and those with side information.
Let $\varepsilon=(-1)^{U_i}$ where $U_i\sim\mathrm{Bern}(1/2)$. The average log-wealth of the random guesser is $\mathbb{E}_{U'_i,\Delta L_i}{\log(1+ t(-1)^{U'i}\Delta L_i)}$, and the average log-wealth of the informed player is $W_i(t)=\mathbb{E}_{{U}_i,\Delta L_i}{\log(1+ t(-1)^{U}_i\Delta L_i)}$.

Clearly, the value of side information impacts the second player's income. To quantify this impact, by the DV-representation of KL divergence (cf.~Lemma~\ref{lem:DV-KL}), we have
\begin{align}
    I(\Delta L_i;U_i)\geq& \sup_{t\in(0,1)} \mathbb{E}_{{U}_i,\Delta L_i}{\log\pr{1+ t(-1)^{{U}_i}\Delta L_i}}-\log\ex{U'_i,\Delta L_i}{e^{\log\pr{1+ t(-1)^{U'_i}\Delta L_i}}}\notag\\
    =&\sup_{t\in(0,1)} \mathbb{E}_{{U}_i,\Delta L_i}{\log\pr{1+ t(-1)^{{U}_i}\Delta L_i}}-\log\mathbb{E}_{U'_i,\Delta L_i}{\pr{1+ t(-1)^{U'_i}\Delta L_i}}\notag\\
    =&\sup_{t\in(0,1)} \mathbb{E}_{{U}_i,\Delta L_i}{\log\pr{1+ t(-1)^{{U}_i}\Delta L_i}},\label{ineq:coin-betting-bound}
\end{align}
where the last equality holds because $\mathbb{E}_{U'_i,\Delta L_i}{ t(-1)^{U'_i}\Delta L_i}=\ex{\Delta L_i}{\mathbb{E}_{U'_i}{ t(-1)^{U'_i}\Delta L_i}}=\ex{\Delta L_i}{\frac{t\Delta L_i-t\Delta L_i}{2}}=0$. Notice that the $\log$-wealth of the comparator, i.e. the random-guesser, appears in Eq.~(\ref{ineq:coin-betting-bound}) in the form of $\inf_\lambda \ex{\Delta L_i,U_i'}{\phi^*\pr{\log(1+t(-1)^{U'_i}\Delta L_i)+\lambda}}-\lambda$.

Similar to \cite{jang2023tighter}, by the inequality $\log(1+x)\geq x-x^2$ for $x\geq-0.68$ (see Figure~\ref{fig:both}(Right) for a visualization of this inequality), we have
\begin{align*}
    I(\Delta L_i;U_i)\geq&\sup_{t\in(0,1)} \mathbb{E}_{{U}_i,\Delta L_i}{\log\pr{1+ t(-1)^{{U}_i}\Delta L_i}}\\
    \geq&\sup_{t\in(0,0.68)} \mathbb{E}_{{U}_i,\Delta L_i}{\log\pr{1+ t(-1)^{{U}_i}\Delta L_i}}\\
    \geq &\sup_{t\in(0,0.68)} \ex{{U}_i,\Delta L_i}{t(-1)^{{U}_i}\Delta L_i-t^2\Delta L^2_i}\\
    \geq &\sup_{t\in(0,0.68)} \ex{{U}_i,\Delta L_i}{t(-1)^{{U}_i}\Delta L_i-t^2}\\
    =&
    \frac{1}{4}\mathbb{E}^2_{{U}_i,\Delta L_i}{\br{(-1)^{{U}_i}\Delta L_i}},
\end{align*}
We then have
\begin{align}
    \ex{{U}_i,\Delta L_i}{(-1)^{{U}_i}\Delta L_i}\leq 2\sqrt{I(\Delta L_i;U_i)}.
    \label{ineq:square-root-coin-betting}
\end{align}

Moreover, we now assume that the side information is ``perfect'' such that $(-1)^{{U}_i}\Delta L_i\geq 0$ (i.e. an interpolating or overfitting algorithm) always holds for each $i$, namely the income is always positive. Under this condition, the players can safely set $t=1$, that is, the players ``all-in'' their money.  Then, by $\log{(1+x)}\geq \frac{x}{(1+x)}$ for $x>-1$, we have
\begin{align*}
    I(\Delta_i;U_i)\geq&\mathbb{E}_{{U}_i,\Delta L_i}{\log\pr{1+ (-1)^{{U}_i}\Delta L_i}}\\
    \geq&  \mathbb{E}_{{U}_i,\Delta L_i}{\frac{(-1)^{{U}_i}\Delta L_i}{1+(-1)^{{U}_i}\Delta L_i}}\\
    \geq&  \mathbb{E}_{{U}_i,\Delta L_i}{\frac{(-1)^{{U}_i}\Delta L_i}{2}},
\end{align*}
which gives us
\begin{align}
    \mathbb{E}_{{U}_i,\Delta L_i}{(-1)^{{U}_i}\Delta L_i}\leq 2I(\Delta_i;U_i).
    \label{ineq:interpolating-coin-betting}
\end{align}

Consequently, recovering several existing ld-CMI bounds (up to a constant) from Eq.~(\ref{ineq:interpolating-coin-betting}) and Eq.~(\ref{ineq:square-root-coin-betting}) is straightforward.

\end{appendices}

\end{document}